\newtheorem{theorem}{Theorem}
\newtheorem{lem}[theorem]{Lemma}
\newtheorem{definition}{Definition}
\newtheorem{assumption}{Assumption}
\newtheorem{problem}{Problem}
\newcommand{\mathbold}[1]{\boldsymbol{#1}}
\title{\LARGE \bf Moving Obstacle Avoidance: a Data-Driven Risk-Aware Approach}
\author{Skylar X. Wei$^{*}$, Anushri Dixit$^{*}$, Shashank Tomar, and Joel W. Burdick% <-this % stops a space
\thanks{$^{*}$Both authors contributed equally.}% <-this % stops a space
\thanks{The authors are with the Division of Engineering $\&$ Applied Science, California Institute of Technology,  MC 104-44,
Pasadena, CA 91125, (\{swei, adixit, stomar, jburdick\}@caltech.edu). }}%
\begin{document}
\maketitle
\thispagestyle{empty}
\pagestyle{empty}

%%%%%%%%%%%%%%%%%%%%%%%%%%%%%%%%%%%%%%%%%%%%%%%%%%%%%%%%%%%%%%%%%%%%%%%%%%%
\begin{abstract}
This paper proposes a new structured method for a moving agent to predict the paths of dynamically moving obstacles and avoid them using a risk-aware model predictive control (MPC) scheme. Given noisy measurements of the a priori unknown obstacle trajectory, a bootstrapping technique predicts a set of obstacle trajectories. The bootstrapped predictions are incorporated in the MPC optimization using a risk-aware methodology so as to provide probabilistic guarantees on obstacle avoidance. We validate our methods using simulations of a 3-dimensional multi-rotor drone that avoids various moving obstacles, such as a thrown ball and a frisbee with air drag. 
\end{abstract}

%%%%%%%%%%%%%%%%%%%%%%%%%%%%%%%%%%%%%%%%%%%%%%%%%%%%%%%%%%%%%%%%%%%%%%%%%%%%%%%%

\section{INTRODUCTION}
\vspace{-1mm}
Emerging applications of robots in urban, cluttered, and potentially hostile environments have increased the importance of online path planning with obstacle behavior classification, and avoidance~\cite{fan2021step}. Traditionally, the interaction of a robot with an obstacle is formulated as the problem of planning a collision-free path to navigate from a starting position to a goal \cite{latombe_1996}. In dynamic environments with an arbitrary number of moving obstacles, and agents with bounded velocity, this problem is known to be NP-hard ~\cite{canny1988complexity}.
\vspace{-1mm}

One way to handle dynamic obstacles is to limit their modeled motions. In~\cite{classification_highcite}, the authors assumed a priori knowledge of the obstacle dynamics or motion patterns.  
% The authors of ARMTD~\cite{ARMTD} and CHOMP~\cite{ratliff2009chomp} provided an offline planner while assuming that the obstacles are quasi-static. 
Or, one can plan the agent's path off-line using a Probabilistic Roadmap (PRM) in a field of static obstacles, and then replan when dynamical behaviors are observed~\cite{PRM_cite}. However, without prior knowledge of the obstacle behavior, a worst-case analyses of unsafe obstacle locations can lead to overly conservative behaviors.  Potential fields (PFs) have been actively used for dynamic obstacle avoidance: e.g., recent works   \cite{APF_cite1} apply artificial PFs with stochastic reachable sets in Human-Centered environments. Despite its computational efficiency and scalability with the number of obstacles, slow moving and simple (linear or double integrator-like) dynamics are assumed. Switching-based planning methods detect and classify dynamic obstacle behavior against a set of trajectories, such as constant speed, linear, and projectile-like motion~\cite{ switch_based_trajectory_classifier2, switch_based_trajectory_classifier3}.  Classification-based methods require distinguishable obstacle behaviors and prior knowledge about the dynamic environment to a generate set of trajectories.
%Further, recent work~\cite{PANTHER} has  experimentally demonstrated a real-time perception-aware trajectory planner in dynamical environment (PANTHER). The authors extrapolate obstacle position using a polynomial fit on the obstacle position history. 

%Adhering to the idea of extracting simple mathematical models to predict and forecast dynamical behavior, 
% Although the motion-planning field is vast, the design strategies can roughly be categorized into formal and ad-hoc methods \cite{switch_based_trajectory_classifier1}. Formal methods devise mathematically provable paths that offer performance guarantees, at the expense of speed and and scalability. Ad-hoc methods (such as APFs and PRMs) on the other hand act in a complementary manner and prove their efficacy via repeated simulations. However existing ad-hoc methods either require a lot of pre-computation (such as the APF-SR\cite{switch_based_trajectory_classifier1}) or can classify and navigate across a limited range of trajectories (switching methods\cite{switch_based_trajectory_classifier2}). This work aims to act as a bridge between the seemingly disjoint classes and present a technique that can not only predict a wide variety of obstacle trajectories but also avoid them with desired accuracy using a risk-aware MPC - providing a fast, online and end-to-end solution to the obstacle avoidance problem.
% If you decide to include this paragraph, you can remove the first  lines from the next paragraph as they communicate a similar thing
This paper presents a new framework for discovering the dynamics of a priori unknown moving obstacles, forecasting their trajectories, and providing risk-aware optimal avoidance strategies. It replaces the need for obstacle trajectory/model classification, while allowing an online implementation. Extracting a dynamics model from data is challenging \cite{brunton2017chaos} and the difficulty increases when the available data is limited, noisy, and partial. To tackle the partial measurement issue, we leverage Takens embedding theorem ~\cite{takens1981detecting}, which enables partial observations to produce an attractor that is diffeomorphic to the full-state attractor.  
% The separation of useful signal from noise can be rephrased as a popular low-rank matrix denoising problem. 
We then use Singular Spectrum Analysis (SSA)~\cite{SSAbook,ssa_withR} to separate noise from the underlying signal and to extract a recurrence model to predict the obstacle behavior. Note that our use of time delay embedding is also the basis of Eigensystem Realization Algorithm (ERA) in linear system identification \cite{ERA} and has been connected with Koopman operators \cite{brunton2017chaos}. Inspired by \cite{ensemble_sindy}, we employ a classical bootstrap method to forecast a set of moving obstacle trajectories with statistical quantification. % The impact of the set of obstacle trajectory forecasts is included into the 
We propose an MPC planner that incorporates the set of obstacle forecasts  as an affine conservative approximation of a distributionally-robust chance constraint.  This constraint is then efficiently reformulated in a risk-aware way, allowing the MPC optimization to solved using a sequential convex programming approach~\cite{SCP1,scp2}.

% our path planner accounts for the variations in the bootstrap predictions of possible obstacle trajectories. A model predictive control (MPC) based planner tracks a reference trajectory while avoiding the dynamic obstacles in a receding horizon fashion. 

We demonstrate our approach on three scenarios that exhibit increasingly complicated dynamical behavior.  Monte-Carlo simulations verify the planner's ability to uphold the user chosen chance constraint. The risk-aware reformulation not only gives provable probabilistic collision avoidance guarantees, but also allows on-line execution of the planner.

%This letter is organized as follows. Section \ref{sec:prelims} presents preliminaries on SSA. Our main problem and assumptions are stated in Section \ref{sec:modeling}.  We present a robust nonlinear formulation of the problem in Section \ref{sec:robustsection}, and then a risk-aware reformulation  in Sction \ref{sec:risksection}. Simulations can be found in Section \ref{sec:Num_exp}, while we conclude in Section \ref{sec:conclusion}.

\textbf{Notation: } The set of positive integers, natural number, real numbers, and positive real numbers are denoted as $\mathbb{Z}_+$, $\mathbb{N}$, $\mathbb{R}$, and $\mathbb{R}_{+}$, respectively. We denote the sequence of consecutive integers $\{i,i+1,\cdots,i+k\}$ as $\mathbb{Z}^{i:i+k}$. The finite sequence $\{a_1,\cdots,a_k\}$ of a scalar or vector variable $a$ is denoted as $\{a\}_{1}^{k}$. The expression $I_{n\times n}$ is used to denote $n$ by $n$ identity matrices and $\mathbold{1} = [1,1,1]^T$.
%%% ------------------------------------------------------------------ %%%

\section{SSA Preliminaries} \label{sec:prelims}
\vspace{-1mm}
% \subsection{SSA preliminaries}
\begin{figure*}[!ht]
\vspace{2mm}
    \centering
\includegraphics[width=17.5cm]{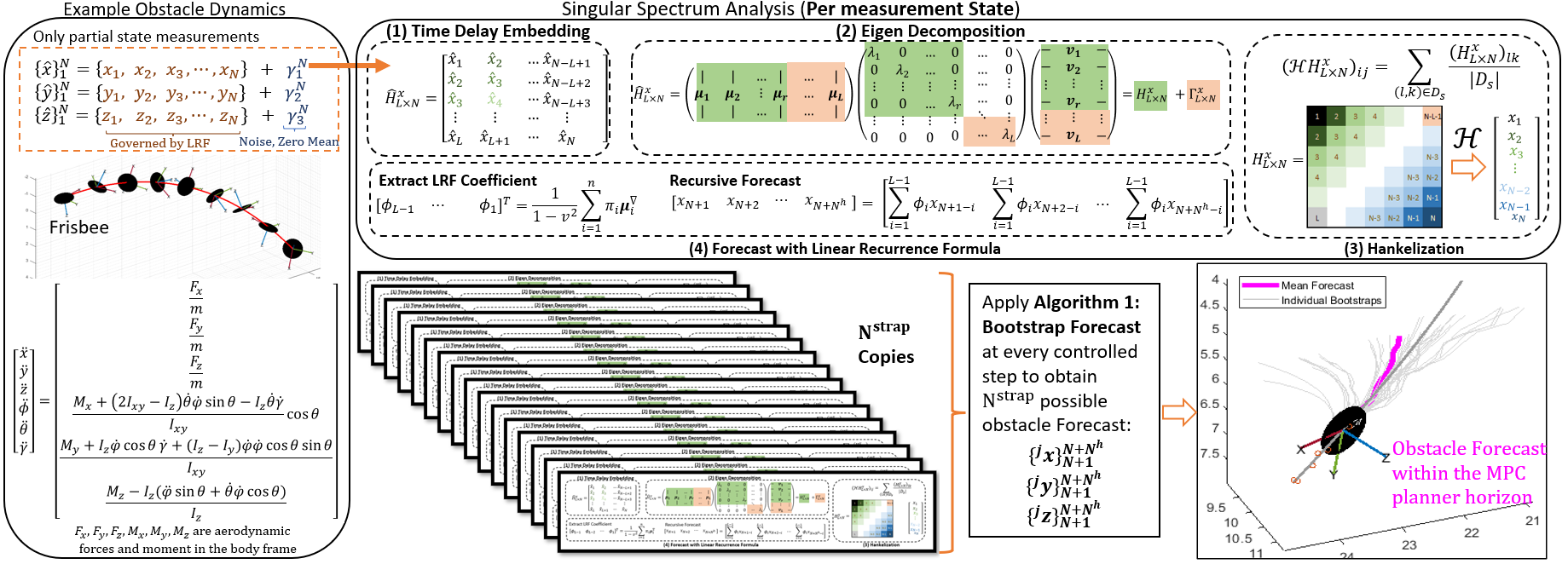}
        \caption{A description of bootstrap-SSA-forecast architecture in forecasting the trajectory of a Frisbee where the stochastic observables (corrupted by zero-mean, noise) consist of $\{\hat{\mathbold{o}}\}_{1}^{N} = [\{\hat{x}\}_1^N,\{\hat{y}\}_1^N,\{\hat{z}\}_1^N]$, the Frisbee's center positions with respect to an inertial frame. The SSA analysis and bootstrap forecast is applied to every observable states as indicated. Despite its 12-state governing dynamics~\cite{hummel2003frisbee} and with only center position measurements of the Frisbee, we show an example $N^{\text{strap}}$ forecasts of the Frisbee trajectory for future time steps $\{1,2,\cdots,N^h\}$ using our proposed framework.}
        %$\mathbb{Z}^{1:N^h}$ 
    \label{fig:MSSA_architecture}
    \vspace{-5mm}
  \end{figure*}
%Singular Spectrum Analysis (SSA)  \cite{SSAbook} reconstructs and predicts noisy time series from data with little apriori knowledge of the underlying process. 
Consider a discrete-time multivariate stochastic process $\{o^{m}\}_{1}^{N}$ where $m$ denotes the $m^{\text{th}}$ observable measurement of the process, and $N$ is the total number of available observations, i.e., $o^m_i$ denotes the $m^{th}$ observation variable at process sampling index $i$. Suppose the true model of the stochastic process in terms of the observables is
% \begin{equation} \label{eq:signal_struc}
%     \hat{\mathbold{y}}^{k} = \begin{bmatrix}\hat{y}_{1}^{k} \\ \hat{y}_{2}^{k} \\ \vdots \\ \hat{y}_{m}^{k}  \end{bmatrix} = \begin{bmatrix}y_{1}^{k} \\ y_{2}^{k} \\ \vdots \\ y_{m}^{k}  \end{bmatrix} +\begin{bmatrix}\gamma_{1}^{k} \\ \gamma_{2}^{k} \\ \vdots \\ \gamma_{m}^{k}  \end{bmatrix}   = \mathbold{y}^k + \mathbold{\gamma}^{k}
% \end{equation}
\vspace{-1mm}
\begin{equation} \label{eq:signal_struc}
    \underbrace{\begin{bmatrix}\hat{o}^{m}_{1} & \cdots & \hat{o}^{m}_{N}  \end{bmatrix}}_{ \hat{\mathbold{o}}^{m}} \!=\! \underbrace{\begin{bmatrix}o^{m}_{1} & \cdots& o^{m}_{N}  \end{bmatrix}}_{\mathbold{o}^m} \!+\!\underbrace{\begin{bmatrix}\gamma^{m}_{1} &  \cdots & \gamma^{m}_{N}  \end{bmatrix}}_{\mathbold{\gamma}^{m}}
    \vspace{-1mm}
\end{equation}
where~\small$\mathbold{\gamma}^{m}$\normalsize~denotes a random discrete-time zero-mean measurement noise, and~\small$\mathbold{o}^m$\normalsize~is the noiseless observable that captures the  governing laws, which can be composed of \textit{trends}, \textit{seasons}, and \textit{stationary} time series.  Singular Spectrum Analysis \cite{SSAbook} separates the true signal $\mathbold{o}^m$ and the  noise $\mathbold{\gamma}^m$ and extracts a recursive governing dynamic model of $\mathbold{o}^{m}$ that can generate a short term accurate forecast. 
%$\{\mathbold{y}^{m}\}_{N+1}^{N+N^h}$, where $N^h$ is the forecast horizon length.

% \note{ In the paragraph below, you use $\{\hat{\mathbold{y}}^m\}_{1}^{N}$, a bold faced symbol, indicating a vector.  Do you mean the scalear $\{\hat{y}^m\}_{1}^{N}$, or the vector $\{\hat{\mathbold{y}}\}_{1}^{N}$ ?}

\subsubsection{Time Delay Embedding}
%To achieve the first goal, we need to introduce Hankel time-delay embedding. 
Taken's  method of delays \cite{takens1981detecting} can be used to reconstruct qualitative features of the full-state, phase space from delayed partial observations.  The $m^{\text{th}}$ state raw observables $\hat{\mathbold{o}}^m$ are delay embedded into the following trajectory (Hankel) matrix:
\vspace{-1mm}
% \begin{align} \label{eq:hankel}
%     H^{m}_{[L,N]}\! &=\!  \begin{bmatrix}
%         \hat{y}^m_{1} & \hat{y}^m_{2}& \dots &  \hat{y}^m_{N-L+1} \\
%         \hat{y}^m_{2} & \hat{y}^m_{3}& \dots &  \hat{y}^m_{N-L+2} \\
%         \vdots & \vdots  & \ddots &\vdots\\
%         \hat{y}^m_{L} & \hat{y}^m_{L+1}& \dots &  \hat{y}^m_{N} \\
%     \end{bmatrix}
% \end{align}
\begin{align} \label{eq:hankel}
    H^{m}_{[L,N]}\! &=\!  \left[\begin{smallmatrix}
        \hat{o}^m_{1} & \hat{o}^m_{2}& \dots &  \hat{o}^m_{N-L+1} \\
        \hat{o}^m_{2} & \hat{o}^m_{3}& \dots &  \hat{o}^m_{N-L+2} \\
        \vdots & \vdots  & \ddots &\vdots\\
        \hat{o}^m_{L} & \hat{o}^m_{L+1}& \dots &  \hat{o}^m_{N} \\
    \end{smallmatrix}\right]
    \vspace{-2mm}
\end{align}
where $L$ represents time delay length and $N$ is length of the time series. The repeating patterns in the Hankel matrix represent underlying trends and oscillations, and can be extracted from its covariance matrix: \small$X^{m} = H^{m}_{[L,N]}(H^{m}_{[L,N]})^{T}$\normalsize. 

% \note{The threshholding processd in the next paragraph is completely unmotivated, and not interpretable by a reader}

\subsubsection{Eigen Decomposition}
%The symmetric covariance matrix $X^{m}$ is comprised of both the true signal and noise. 
To recover the true signal $\mathbold{o}^m$, we aim to find the best, low-rank matrix approximation of the true signals by thresholding the eigenvalues of $X^{m}$, similar to~\cite{abdi2010principal}. More specifically, the symmetric covariance matrix $X^{m}$ has a spectral decomposition  $U\Sigma U^T$, where $\Sigma$ is a diagonal matrix with real eigenvalues $\lambda_1 \geq \lambda_2 \geq \cdots \lambda_L$. The matrix of left eigenvectors $U = \begin{bmatrix}\mathbold{\mu}_1, \cdots, \mathbold{\mu}_L \end{bmatrix}$ is orthogonal. The truncated right eigenvectors $V = [\mathbold{\nu}_1,\cdots,\mathbold{\nu}_L]^{T} \in \mathbb{R}^{L\times N}$ of $X^m$ can be found as $V = U\Sigma$. Suppose $\mathbold{\lambda}$ is the optimal threshold %for separating the signal $\mathbold{y}^{m}$ from the noise $\mathbold{\gamma}^{m}$, 
and $\lambda_n \geq \mathbold{\lambda} \geq \lambda_{n+1}$, which partitions the Hankel matrix~\small $H^{m}_{[L,T]}$\normalsize~as:
\vspace{-3mm}
\begin{equation} \label{eq:noise_signal_separation}
    H^m_{[L,T]}  = \underbrace{\sum_{p = 1}^{n} \sqrt{\lambda_p}\mathbold{\mu}_p\mathbold{\nu}_p^T}_{\triangleq H_{[L,K]}^o } + \underbrace{\sum_{p=n+1}^{L}\sqrt{\lambda_p}\mathbold{\mu}_p\mathbold{\nu}_p^T}_{\triangleq H_{[L,K]}^\gamma}.
    \vspace{-2mm}
\end{equation}
% Unlike standard SSA \cite{SSAbook}, aims to separate a time series' trends and periodic structures, we desire a noiseless obstacle's behavior that allows us to extract its governing dynamics. To account incomplete noise and signal separation and low data limits, we employ a progressively relaxed singular value threshold and sample bootstrapping--see Section \ref{sec:bootstrap}. 

% \note{The concept of "secondary diagonals" below is unclear.}

\subsubsection{Hankelization}
Matrix~\small$H_{[L,K]}^{o}$\normalsize \eqref{eq:noise_signal_separation} should maintain a Hankel structure, and minor variations in its $k^{th}$ secondary diagonals result from insufficient noise removal.\footnote{The $k^{\text{th}}$ secondary diagonals of a matrix $M$ are also the $k^{\text{th}}$ diagonals of  $M$ flipped horizontally with respect to its middle column.}~Therefore, a Hankelization step is introduced to perform secondary diagonal averaging, finding the matrix~\small$\mathcal{H}O$\normalsize~that is closest to~\small$H_{[L,K]}^o$\normalsize~with respect to the Frobenius norm among all Hankel matrices of size \small$L\times N$\normalsize~\cite{SSAbook}.  The operator~\small$\mathcal{H}$\normalsize~acting on an arbitrary~\small$L\times N$\normalsize~ matrix~\small$H_{[L,N]}^{y}$\normalsize~entry wise is defined as follows: for the~\small$(i,j)^{th}$\normalsize~element of matrix~\small$H_{[L,N]}^{o}$\normalsize~and~\small$i+j = s$\normalsize, define a set~\small$D_s \triangleq \{(l,n): l+n = s, l\in\mathbb{Z}^{1:L}, n\in \mathbb{Z}^{1:N}\}$\normalsize, is mapped to $(i,j)^{\text{th}}$ element of the hankelized~\small$\mathcal{H}H_{[L,N]}^{o}$\normalsize~via the expression in Fig.\ref{fig:MSSA_architecture} (for the case of~\small$\mathbold{o}^m = \mathbold{x}$\normalsize),
% \begin{equation} \label{eq:diagional_averging}
%     (\mathcal{H}H_{[L,N]}^m)_{ij} = \sum_{l,n\in D_s} \frac{(H_{[L,N]})_{ln}}{|D_s|}
% \end{equation}
where~\small$|D_s|$\normalsize~denotes the number of elements in the set~\small$D_s$
\normalsize. %By applying Hankelization to $H_{[L,K]}^y$, we find the matrix $\mathcal{H}Y$ that is closest to $H_{[L,K]}^y$ with respect to the Frobenius norm among all Hankel matrices of size $L\times N$ \cite{SSAbook}.

\subsubsection{Forecast with Linear Recurrence Formula}

\begin{definition} \label{defn:Lorder}
A time series $Y_N=\{y\}_{1}^{N}$ admits an \textit{L-decomposition of order not larger than d}, denoted by $\mbox{ord}_L(Y_N)\leq d$, if there exist two systems of functions %
$
    \varrho_k:\mathbb{Z}^{0:L-1}\to \mathbb{R},  \vartheta_k:\mathbb{Z}^{0:L-1}\to \mathbb{R},
$
such that 
$
    y_{i+j} = \sum_{k=1}^{d} \vartheta(i)\varrho_k(j) \quad {i,j}\in {\mathbb{Z}^{0:L-1}\times \mathbb{Z}^{0:L-1}}
$ for all $k\in \mathbb{Z}^{1:d}$.
\end{definition}
\vspace{-1mm}
If $\mbox{ord}_L(Y_N) =  d$, then the series $Y_N$ admits a \textit{L-decomposition of the order d} and both systems of functions $(\varrho_1,\cdots,\varrho_d)$ and $(\vartheta_1,\cdots,\vartheta_d)$ are linearly independent \cite{Golyandina2001AnalysisOT}.
\begin{definition} \label{defn:LRF}
A time series $\{y\}_{1}^{N}$ is governed by a linear recurrent relations/formula (LRF), if there exist coefficients $\{\phi\}_{1}^{m}$ and $\phi_m \neq 0$ such that 
\vspace{-1mm}
\begin{equation} \label{eq:LRF}
    y_{i+d} = \sum_{k=1}^{d}\phi_{k}y_{i+d-k}, \quad \forall
     i \in \mathbb{Z}^{0:N-d}, d<N\ .
\end{equation}
\end{definition}
\vspace{-2mm}
Real-valued time series governed by LRFs consists of sums of products of polynomials (trends), exponentials (stationary, linear time invariant) and sinusoids (seasons)~\cite{SSAbook}.
% \begin{equation*}
%     y_k = \sum_{m = 1}^{p}\left(\sum_{j = 1}^{q}c_{mj}k^{j}\right)e^{\alpha_m k}cos(2\pi \omega_m k + \phi _m)
% \end{equation*}
% where $\omega_m \in [0,0.5]$, and coefficients $p,q,c_{mj},\alpha_m, \phi_m$ are coefficients whose descriptions can be found in \cite{SSAbook,canny1988complexity}. The main take away is that this class of time series consists of sums of products of polynomials, exponentials, and sinusoids.

% \note{In the theorem, the LRF coefficients are described using the symbol $\phi$, while in Definition \ref{defn:LOrder}, the "varphi" symbol $\varphi$ is used for the time series decomposition functions.  This is not a true symbol clash, but it may be confusing to some readers.}

\begin{theorem}\cite{SSAbook} \label{thm:LRF}
Let $\mathbold{\mu}_i^{1:L-1}$ be the vector of the first $L-1$ components of a left eigenvector $\mathbold{\mu}_i$ of $H_{[L,N]}^m$, and let $\pi_i$ be the $L^{\text{th}}$ component of eigenvector $\mathbold{\mu}_i$. Let $v^2 \triangleq \sum_{i= 1}^{d} \pi_i^2$. Under Assumptions \ref{assum:assumption2} and \ref{assum:assumption3} (see below), the LRF coefficients $\phi_i$ where $ i\in [1,L-1]$  can be computed as:
\vspace{-1.5mm}
\begin{equation} \label{eq:recurrence_coeff}
    \begin{bmatrix}
    \phi_{L-1}&
    \phi_{L-2} &
    \cdots &
    \phi_1
    \end{bmatrix}^{T} = \frac{1}{1-v^2}\sum_{i=1}^{d} \pi_i \mathbold{\mu}_i^{1:L-1}
    \vspace{-1mm}
\end{equation}
and $\mathbold{y}$ evolves as the LRF:
$ \mathbold{y}_{N+1 } = \sum_{j=1}^{L-1}\phi_j\mathbold{y}_{N-j}$.
\end{theorem}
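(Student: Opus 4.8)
The plan is to recognize the right-hand side of \eqref{eq:recurrence_coeff} as a rescaled orthogonal projection of the last coordinate vector onto the $d$-dimensional signal subspace, and then to read the recurrence off directly from orthogonality. Let $\mathcal{L}_d\subset\mathbb{R}^L$ denote the column space of the noiseless trajectory matrix $H_{[L,K]}^o$ in \eqref{eq:noise_signal_separation}. I take Assumptions \ref{assum:assumption2}--\ref{assum:assumption3} to supply exactly two structural facts: (i) $\operatorname{rank} H_{[L,K]}^o = d$ with $d\le L-1$, so that $\mathcal{L}_d=\operatorname{span}\{\mathbold{\mu}_1,\dots,\mathbold{\mu}_d\}$ for the orthonormal left eigenvectors appearing in \eqref{eq:noise_signal_separation}; and (ii) $\mathcal{L}_d$ is \emph{non-vertical}, i.e.\ the last coordinate vector $\mathbold{e}_L=(0,\dots,0,1)^{T}$ does not belong to $\mathcal{L}_d$.

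The first step is to check well-posedness and identify the coefficient vector. Let $P=\sum_{i=1}^d\mathbold{\mu}_i\mathbold{\mu}_i^{T}$ be the orthoprojector onto $\mathcal{L}_d$. Since $\mathbold{\mu}_i^{T}\mathbold{e}_L=\pi_i$, we have $P\mathbold{e}_L=\sum_{i=1}^d\pi_i\mathbold{\mu}_i$, whose last entry is $v^2$ and whose squared norm is $v^2$; by (ii), $\|P\mathbold{e}_L\|^2=v^2<\|\mathbold{e}_L\|^2=1$, so $1-v^2>0$ and \eqref{eq:recurrence_coeff} is well defined. Now set $\mathbold{c}\triangleq\tfrac{1}{1-v^2}(P-I)\mathbold{e}_L$, which lies in $\mathcal{L}_d^{\perp}$. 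A componentwise computation shows that the first $L-1$ entries of $(P-I)\mathbold{e}_L$ are $\sum_{i=1}^d\pi_i\mathbold{\mu}_i^{1:L-1}$ and the last entry is $v^2-1$, hence after dividing by $1-v^2$ we obtain $\mathbold{c}=(\phi_{L-1},\phi_{L-2},\dots,\phi_1,-1)^{T}$ with $\phi_j$ exactly as in \eqref{eq:recurrence_coeff}. For any $Y=(y_1,\dots,y_L)^{T}\in\mathcal{L}_d$, orthogonality $\mathbold{c}^{T}Y=0$ reads $y_L=\sum_{j=1}^{L-1}\phi_j y_{L-j}$; applying this to each column $(o^m_k,\dots,o^m_{k+L-1})^{T}$, $k=1,\dots,N-L+1$, of $H_{[L,K]}^o$ shows that the LRF with coefficients $\phi_j$ governs $\mathbold{o}^m$ on the observed range.

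The remaining point is the forecast claim, and this is where I expect the real work. I need the one-step-ahead window $(o^m_{N-L+2},\dots,o^m_N,o^m_{N+1})^{T}$ to still lie in $\mathcal{L}_d$, for then $\mathbold{c}^{T}$ applied to it yields $o^m_{N+1}=\sum_{j=1}^{L-1}\phi_j o^m_{N+1-j}$, i.e.\ the stated recurrence for $\mathbold{y}$. This is the SSA continuation property: a series with $\operatorname{ord}_L(Y_N)=d\le L-1$ is governed on its whole extended range by a \emph{minimal} LRF of order $d$ (Definitions \ref{defn:Lorder}--\ref{defn:LRF}), so appending the value that minimal recurrence prescribes does not raise the rank of the trajectory matrix and the new column stays inside the $d$-dimensional column space $\mathcal{L}_d$. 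The cleanest route is therefore to establish this minimal-LRF / rank-preservation fact first (standard SSA structure theory, \cite{SSAbook,Golyandina2001AnalysisOT}) and only then invoke $\mathbold{c}\perp\mathcal{L}_d$ from the previous step; everything else is bookkeeping with projections. A secondary check worth stating explicitly is that Assumptions \ref{assum:assumption2}--\ref{assum:assumption3} are precisely what rules out the degeneracy $v^2=1$, as recorded in the well-posedness step.
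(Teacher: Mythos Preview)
The paper does not supply its own proof of this theorem; it is stated with a citation to \cite{SSAbook} and no argument follows. Your proposal is therefore not competing against anything in the paper itself, and it is in fact the standard argument one finds in that reference and in \cite{Golyandina2001AnalysisOT}: identify the signal subspace $\mathcal{L}_d=\operatorname{span}\{\mathbold{\mu}_1,\dots,\mathbold{\mu}_d\}$, observe that $(I-P)\mathbold{e}_L\in\mathcal{L}_d^{\perp}$ has last entry $1-v^2\neq 0$ under the non-verticality hypothesis, normalize so the last entry is $-1$, and read off the recurrence from orthogonality to every column of the trajectory matrix. Your handling of the forecast step via rank preservation of the extended Hankel matrix is likewise the textbook route. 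The argument is correct as outlined; the only cosmetic point is that Assumption~\ref{assum:assumption3} in this paper is a kinematic separation condition on agent/obstacle distances and does not obviously encode non-verticality of $\mathcal{L}_d$, so in a standalone write-up you would want to state $\mathbold{e}_L\notin\mathcal{L}_d$ (equivalently $v^2<1$) as an explicit hypothesis rather than attributing it to that assumption.
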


% Suppose the discrete-time system $\eqref{eq:DT_system_dynamics}$ is linear given by:
% \begin{equation} \label{eq:DT_linear_dynamics}
%     \mathbold{x}_{i+1} = A\mathbold{x}_{i} + B\mathbold{u}_i
% \end{equation}
% where $A\in \mathbb{R}^{n_{\mathbold{x}}\times n_{\mathbold{x}}}$ is the linear update matrix, and $B\in \mathbb{R}^{n_{\mathbold{x}}\times n_{\mathbold{u}}}$ is the actuation matrix. 
 
%%% -------------------------------------------------------- %%% 
\vspace{-2mm}
\section{Problem Statement} \label{sec:modeling}
\vspace{-1mm}
% \note{should there be a process noise term in the dyanamics below?}

Consider the linear discrete-time dynamical agent model:
\vspace{-1mm}
\begin{align} \label{eq:DT_system_dynamics}
    \mathbold{x}_{i+1} =  A\mathbold{x}_{i} + B\mathbold{u}_{i},  \quad \quad   \mathbold{y}_{i+1} = G\mathbold{x}_{i+1} 
    \vspace{-1mm}
\end{align}
where $\mathbold{x}_{i} \in \mathbb{R}^{n_{\mathbold{x}}}$, $\mathbold{u}_{i} \in   \mathbb{R}^{n_{\mathbold{u}}}$, and $\mathbold{y}_{i} \in \mathbb{R}^{n_{\mathbold{y}}}$ for all $i \in \mathbb{N}$ correspond to the system state, controls, and output at time index $i$ respectively. The state transition, actuation, and measurement matrices are $A \in \mathbb{R}^{n_{\mathbold{x}}\times n_{\mathbold{x}}}  $, $B\in \mathbb{R}^{n_{\mathbold{x}}\times n_{\mathbold{u}}}$, and $G\in \mathbb{R}^{n_{\mathbold{y}}\times n_{\mathbold{x}}}$ respectively. Let $C\in \mathbb{R}^{3\times n_{\mathbold{x}}}$ be a constant matrix that maps the system's states \eqref{eq:DT_system_dynamics} to the system's Cartesian $x,y,z$ positions with respect to an inertial frame $E$.
We model the $k^{\text{th}}$ obstacle, $k \in \mathbb{Z}^{1:N^{\text{obs}}}$, as a sphere. The set of Cartesian points occupied by the obstacle is $\mathcal{O}_{k}(\mathbold{c}_{k}, r^k) = \{ \mathbold{x} \in \mathbb{R}^{3} : \|\mathbold{c}_{k}-\mathbold{x} \|_2 \leq r_{k}\}$, where $\mathbold{c}_{k} \in \mathbb{R}^{3}$ and $r^k \in \mathbb{R}_+$ are the center and radius of the $k^{\text{th}}$ obstacle. 

This paper considers the case where the agent \eqref{eq:DT_system_dynamics} is tasked with following a specified  reference output trajectory $\mathbold{y}^{\text{ref}}$, whose geometry need not incorporate any obstacle information.  While following this path, the agent may encounter $N^\text{obs}$ spherical stationary or moving obstacles. The obstacle-free region is given by the open set:
\begin{equation} \label{eq:obs_free_set}
    \mathcal{S} \triangleq \left\{\mathbb{R}^{3} \setminus \cup_{k = 1}^{N^\text{obs}} \mathcal{O}_{k}\right\} .
\end{equation}
\begin{assumption} \label{assum:assumption1}
Obstacles can be detected and localized at the same rate ($f^{+}$ Hz) of the planner update. Only measurements of an obstacle's geometric center with respect to frame E are assumed, and they are corrupted by a zero-mean noise. We can estimate the radius, $r_k$, of the $k^{\text{th}}$ obstacle as $\hat{r}_k$, and the estimate satisfies $\hat{r}_k \geq r_k$. \footnote{It is important to note that Assumption \ref{assum:assumption1} \textbf{does not} imply full state measurement. See Fig. \ref{fig:MSSA_architecture} for an example of Frisbee. } 
\vspace{-1mm}
\end{assumption}

% E.g., we later forecast the position of a Frisbee, whose motion is governed by a $12$-state differential equation, with noisy position measurements.
%
\begin{assumption} \label{assum:assumption2}
All obstacle measurements, admit an L-decomposition of order $d$, are governed by LRFs \eqref{eq:LRF} whose LRF coefficients can be uniquely defined.
\vspace{-1mm}
\end{assumption}
%
% \note{We need to discuss or quantify the notion of "sufficient range."}
\begin{assumption} \label{assum:assumption3}
We assume that the obstacles' velocities are bounded by $v_{\text{max}}$, and the initial displacements between all obstacles and the agent are  significantly greater than $ \frac{dv_{\text{max}}}{f^+}$. 
\vspace{-1mm}
\end{assumption}
% \begin{assumption} \label{assum:assumption4}
% We assume the nonlinear system $\eqref{eq:DT_system_dynamics}$ is small-time locally controllable and $n_{\mathbold{y}} \leq n_{\mathbold{u}}$.
% \end{assumption}

% \note{the assumption below that $N^h << N/4$ needs to be discussed somewhere}

\begin{problem}\label{problem:formulation0}\textbf{[Prediction]} Consider a multivariate stochastic process where observables $\{x\}_{1}^{N}$, $\{y\}_{1}^{N}$, and $\{z\}_{1}^{N}$ correspond to the spherical obstacle's true center location with respect to a common reference frame, E. 
%Let $N$ denote the number of available observables. 
The measurements are corrupted by independent, zero-mean noises $\{\gamma_{1}\}_{1}^{N}$, $\{\gamma_{2}\}_{1}^{N}$, and $\{\gamma_{3}\}_{1}^{N}$ (see Fig. ~\ref{fig:MSSA_architecture}). Under Assumptions \ref{assum:assumption1}-\ref{assum:assumption3}, we seek to predict the obstacle position at times $N+1$ to $N+N^h$ using measurements where $N^h \in \mathbb{Z}_+$.
\vspace{-2mm}
\end{problem}
Due to limited and noisy partial data and the lack of explicit dynamics models, 
% perfect partition like \eqref{eq:noise_signal_separation} cannot be guaranteed deterministically. Preserving the stochastic nature of the problem,
we estimate a Bootstrap distribution of the obstacle predictions, denoted by the random set $\mathcal{O}^{\text{pred}}$, from time index $N+1$ to $N+N^h$ and calculate its first and second moments. We  account for errors in the forecast locations due to poor signal and noise separation and bandwidth limits (due to limited training data and incorrect choices of embedding length $L$) by solving  a distributionally robust chance constrained model predictive planning problem. 

% Preserving the stochastic nature of the problem, we aim to estimate the Bootstrap distribution of the obstacle predictions, denoted by the random set $\mathcal{O}_{\text{pred}}$, from index $N+1$ to $N+N^h$ and extract the first and second moments from the measurements. More specifically, we allow and account for the discrepancies in the forecast due to inadequate signal and noise separation and bandwidth limits (both fore the case of not enough training data and incorrect window length L for embedding). This leads to the following chance constraint model predictive planning problem formulation. 

%\note{what does "immediate" mean in the next statement? }

\begin{problem}\label{problem:formulation1}\textbf{[Planning]} Consider the system \eqref{eq:DT_system_dynamics} and free-space \eqref{eq:obs_free_set}. Given a discrete-time reference trajectory $\mathbold{y}^{\text{ref}}_i \, \forall i \in \mathbb{Z}^{1:N^h}$ where $N^h \in \mathbb{Z}_{+}$ is the length of the horizon, convex state constraints $\mathcal{D}^{\mathbold{x}}\subset \mathbb{R}^{n_{\mathbold{x}}}$, convex input constraints $\mathcal{D}^{\mathbold{u}}\subset \mathbb{R}^{n_{\mathbold{u}}}$, and a convex stage cost function $L_i: \mathbb{R}^{n_{\mathbold{x}}}\times  \mathbb{R}^{n_{\mathbold{u}}} \to \mathbb{R}_{\geq0}$, a total of $N^{\text{obs}}$ spherical obstacles each approximated by a set $\mathcal{O}_k^{\text{pred}}$, and risk tolerance $\epsilon\in (0,1]$, we seek to compute a receding horizon controller $\{ \mathbold{u}^{*}\}_{1}^{N_h}$ that avoids the unsafe set ${\mathcal{O}^{\text{pred}}} \triangleq \bigcup_{k=1}^{N^{\text{obs}}}\mathcal{O}_{k}^{\text{pred}}$ via the following non-convex optimization problem:
\vspace{-2mm}
\begin{subequations}
\begin{align}
        \{ \mathbold{u}^{*}\}_{1}^{N_h} = &\min_{\tiny \begin{array}{c}
           \{\mathbold{u}_k\}^{N^h}_1\in \mathbb{R}^{n_{\mathbold{u}}} 
        \end{array}}\normalsize   \sum_{i=1}^{N_h} L_i(\mathbold{y}^{ref}_i - \mathbold{y}_i,\mathbold{u}_i) \\
     \mbox{s.t.} \quad&   \mathbold{x}_{i+1}\! =\!  A\mathbold{x}_{i}\!+\!B\mathbold{u}_{i} \quad \mathbold{y}_{i+1} \!=\! G\mathbold{x}_{i+1}\\
     & \mathbold{x}_{i} \in \mathcal{D}^{\mathbold{x}}, \quad \mathbold{u}_{i} \in \mathcal{D}^{\mathbold{u}}\label{eq:optimization_stateninput_constraint}, \quad \mathbold{x}_1 = \mathbold{x}_{init}\\
     &\mathbb{P}(\mathbold{x}_i \in {\mathcal{O}^{\text{pred}}}) \leq \epsilon \label{eq:optimization_obsavoid_setconstraint}, \quad \forall i \in\mathbb{Z}^{1:N^h}
\end{align}
\end{subequations}
\end{problem}
\vspace{-2mm}

%%% ---------------------------------------------------------- %%%  
\section{Bootstrap Forecasting}
\vspace{-2mm}\label{sec:bootstrap}

%The accuracy of an SSA-based method depends on the structure of the governing time series and the amount of noise remaining after Hankelization. The window length, time series sample frequency as well as the number of available data points dictates the performance of the embedding and decomposition step.

Despite empirical successes in reconstructing and forecasting \cite{ssa_withR}, the theoretical accuracy of SSA is strenuous to obtain, see \cite{forecast_errorbounds}. Inspired by \cite{ensemble_sindy}, we use bootstrapping to improve model discovery and to produce probabilistic forecasts.
\vspace{-2mm}
\begin{algorithm}[!h]
\footnotesize
\caption{Bootstrap Forecast Algorithms (Per Obstacle)}\label{alg:one}
\KwData{Obstacle  center position measurements $\{\hat{\mathbold{x}}\}_{1}^{N},\{\hat{\mathbold{y}}\}_{1}^{N},\{\hat{\mathbold{z}}\}_{1}^{N}$, \newline 
User defined constants: $N^{\text{train}}$,$N^{\text{step}}$, $\delta_t$, $N_\sigma$, $N^{\text{strap}}$ }
\KwResult{Forecast:$\{^j\mathbold{x}\}_{N+1}^{N+N^h}\!,\!\{^j\mathbold{y}\}_{N+1}^{N+N^h}\!,\!\{^j\mathbold{z}\}_{N+1}^{N+N^h}\!,\!\forall j\in \mathbb{Z}^{1:N^\text{straps} }$}
Use  $\{\hat{\mathbold{x}}_{N+1},\hat{\mathbold{y}}_{N+1}$,$\hat{\mathbold{z}}_{N+1}\}$ to update Hankel matrix\;\newline
\While{$\text{istrap} \leq
N^\text{strap}$}{
\While{ $N+1 \geq N^{\text{train}}$}{
\For{$\text{states} = x,y,z$}{
  \While{\scriptsize  $\|Y_{N+1}^{\lambda_1:\lambda_t} - Y_{N+1}^{\lambda_1:\lambda_{t+1}}\|_2$\newline $- \|Y_{N+1}^{\lambda_1:\lambda_{t+1}} - Y_{N+1}^{\lambda_1:\lambda_{t+2}}\|_2 \geq \frac{\delta_t}{N+1}$\normalsize}{
    $t = t+1$}{
  obtain the tuple for each $\text{states}$:($\{\lambda^{\text{istrap}}\}_{1}^{t}$, $\{\mathbold{\mu}^{\text{istrap}}\}_{1}^{t}$, $\mathbold{\phi}^{\text{istrap}}$)}, $\quad \text{istrap} =  \text{istrap}+1$\;
  \newline
  \For{$tt = t+1:t+N^{\sigma}$}{
  obtain the tuple for $\text{states}$: ($\{\lambda^{\text{istrap}}\}_{1}^{tt}$, $\{\mathbold{\mu}^{\text{istrap}}\}_{1}^{tt}$, $\mathbold{\phi}^{\text{istrap}}$), $\quad \text{istrap} =  \text{istrap}+1$}
}
    $N^{\text{train}} = N^{\text{train}} +N^{\text{step}}$
}
{Back-up Strategy}
}{
Apply the tuples ($\{^j\lambda^{\text{istrap}}\}_{1}^{t_j}$, $\{^j\mathbold{\mu}^{\text{istrap}}\}_{1}^{t_j}$, $^j\mathbold{\phi}^{\text{istrap}}$) $\forall j\in\mathbb{Z}_{1:\text{Nstraps}}$ for $x,y,z$ to the updated Hankel, where $t_j$ denotes number of eigenvalues post truncation for the $j^{th}$ bootstrap. Perform a $N^h$ step forecast using $^j\mathbold{\phi}^{\text{istrap}}$.
}
\normalsize
\end{algorithm}
\vspace{-3mm}
% \begin{definition}
% A time series $\mathbold{x}_{[T]} = \{x_i\}_{t=1}^{T}$ is goverened by a linear recurrence formula if there exist $a_1,\cdots, a_{N}$ such that 
% \begin{equation}
%     x_{i+N} = \sum_{k=1}^{N}a_kx_{i+N-k}, \quad i\in [1,T-N]
% \end{equation}
% where $a_N \neq 0$, and $N<T$. 
% \end{definition}
% For real-valued time series, the class of time series governed by the LRF consists of sums of products of polynomials, exponentials and sinusoids which is suitable for the application to extract quasi-periodic trajectories. (mention the time delay embedded koopman work here as well).
% To improve forecast accuracy, we also employ the bootstrap SSA-LRF technique introduced by Rahmani, and the algorithm is brief as:\\

Our real-time bootstrap forecast, Algorithm~\ref{alg:one}, assumes time series measurements of the form \eqref{eq:signal_struc}. The user-defined parameters $N^{\text{train}}$ and $N^{\text{step}}$  represent the allowed number of initial training samples, and the number of newly accumulated samples during an initial bootstrap. Further, one must choose parameters $\delta_{t}$ and $N_{\sigma}$, where $\delta_{t}$ is the threshold used to separate signal from noise, and $N_{\sigma}$ is the number of steps of progressive relaxation of threshold $\delta_{t}$.\footnote{The parameters $\delta_{t}$ and $N_{\sigma}$ are dictated by measurement noise levels, which can be characterized off-line in a controlled experimental setting.  
} Recall the desired signal/noise separation \eqref{eq:noise_signal_separation}, the theoretical optimal threshold $\mathbold{\lambda}$ is unknown and must be estimated. Let $Y_{N}^{\lambda_1:\lambda_d}$ be the Hankelization reconstructed $\hat{\mathbold{y}}$ with the eigenvalues $\{\lambda\}_{1}^{d}$ and their corresponding right and left eigenvectors. Note, if $d > n$ where~\small $\lambda_n \leq \mathbold{\lambda}\leq \lambda_{n+1}$, then the norm values  $\|Y_{N}^{\lambda_1:\lambda_{d+t}} - Y_{N}^{\lambda_1:\lambda_{d+t+1}}\|_2 \approx \|Y_{N}^{\lambda_1:\lambda_{d+t+1}} - Y_{N}^{\lambda_1:\lambda_{d+t+2}}\|_2$\normalsize~since they are comprised of the residual measurement noise. We threshold the difference between two consecutive reconstructions with $\delta_t/N$, i.e. finding the smallest $t\in \mathbb{Z}_+$ s.t.:
\begin{equation} \label{eq:threshold_sigma}
    \|Y_{N}^{\lambda_1:\lambda_t} - Y_{N}^{\lambda_1:\lambda_{t+1}}\|_2 - \|Y_{N}^{\lambda_1:\lambda_{t+1}} - Y_{N}^{\lambda_1:\lambda_{t+2}}\|_2 \leq \frac{\delta_t}{N}
    \vspace{-1mm}
\end{equation}
% \note{"no structure loss bad choice...." needs to be clarified and improved}
% %
Since the selection of the threshold $\delta_t$ is crucial, we add an additional parameter $N^{\sigma}$ to ensure no principle components are lost in $Y_{N}^{\lambda_1:\lambda_d}$ because of bad choice of $\delta_t$, i.e. to avoid $d < n$. To be conservative, we include the next $N^{\sigma}$ largest eigenvalues after the first $t$ eigenvalues in the bootstrapping process. Most importantly, the number of bootstraps, $N^{\text{strap}}$, needs to be determined \textit{a priori}, considering the computation capacity, number of obstacles, and the expected noise level.

The effectiveness of Algorithm \ref{alg:one} depends highly on the time delay length $L$, the number of training measurements $N^{\text{train}}$, the number of bootstraps $N^{\text{strap}}$, and the MPC horizon length, $N^{h}$. We recommend that $N^{\text{train}}$ be at least $10N^h$ and that $L = \frac{N^{\text{train}}}{4}$. $N^{\text{strap}}$ and $N^\text{step}$ should be as large as allowed by the computing platform and benchmarking them offline. 

% \subsection{Alternative method discussion}
% Separation of signal can be done easily with methods like \textit{Fourier analysis} using wavelets. Nevertheless, the comparative studies conducted by \cite{KHELIFA2012334} showed the SSA approach is superior at extracting non-linear trends and seasonal singles, and both SSA approach and wavelet transform has similar performance in separating signal and noise. 

\vspace{-2mm}
\section{Bootstrap Planning}
\vspace{-2mm}
% In this section, we provide a MPC-based path planner for obstacle avoidance. To begin, we take a closer look at the obstacle avoidance constraint and its properties by using the mean and variance of the bootstrap predictions. Next, we use this obstacle avoidance constraint in our MPC optimization and provide probabilistic guarantees on the satisfaction of the obstacle avoidance. 

This section introduces an MPC-based path planner to solve  Problem \ref{problem:formulation1}. First, we revisit the obstacle avoidance constraint \eqref{eq:optimization_obsavoid_setconstraint} and its properties given the mean and variance of the bootstrap predictions. Next, we use this obstacle avoidance constraint in the MPC optimization, and provide probabilistic guarantees of constraint satisfaction. 
%\vspace{-1mm}
% \subsection{Obstacle avoidance}
Algorithm \ref{alg:one} produces $N^{\text{strap}}$ copies of $N^h$ length predictions of the $k^{{\text{th}}}$ obstacle's location. We denote the $j^{\text{th}}$ copy of the bootstrap prediction as \small$
    \{\hat{\mathbold{y}}^{j}_k\}_{1}^{N_{h}} =
    \{
    \hat{\mathbold{y}}^{j}_{1,k}, \hat{\mathbold{y}}^{j}_{2,k}, \cdots, \hat{\mathbold{y}}^{j}_{N^h,k}\}
$\normalsize. The collision avoidance set constraint \eqref{eq:optimization_obsavoid_setconstraint} can be reformulated based on the obstacle shape and center as $\|C\mathbold{x}_{i} -\! \hat{\mathbold{y}}^j_{i,k}\|_2 \geq \hat{r}_k + r_p \triangleq \overline{r}_k$, for each $i \in \mathbb{Z}^{1:N^{h}}$ and $k \in \mathbb{Z}^{1:N^{\text{obs}}}$ and $r_p$ is the safety radius of the agent $\eqref{eq:DT_system_dynamics}$.
This constraint can be equivalently expressed as the following concave (in the state $\mathbold{x}_i$) constraint,
\vspace{-0.25mm}
\begin{align} \label{eq:convexification0}
   (C\mathbold{x}_{i}\!\,-\, \hat{\mathbold{y}}^j_{i,k})^{T}(C\mathbold{x}_{i}\,\,- \,\hat{\mathbold{y}}^j_{i,k})\,\geq\, \overline{r}_k \|(C\mathbold{x}_{i}\! - \!\hat{\mathbold{y}}^j_{i,k})\|_2.
   \vspace{-1mm}
\end{align}
We approximate \eqref{eq:convexification0} as an affine constraint through the use of Sequential Convex Programming (SCP)~\cite{SCP1,scp2}
\vspace{-0.25mm}
\begin{align} \label{eq:convexification1}
   (C\mathbold{x}_{i} - \,\hat{\mathbold{y}}^j_{i,k})^T(C\overline{\mathbold{x}}_{i} - \,\hat{\mathbold{y}}^j_{i,k}) \geq \overline{r}_k \|(C\overline{\mathbold{x}}_{i}  - \,\hat{\mathbold{y}}^j_{i,k})\|_2
   \vspace{-1mm}
\end{align}
where $\overline{\mathbold{x}}_{i}$ is approximated with the solution from previous SCP iterations. Note that Eq. \eqref{eq:convexification1}  over-approximates constraint ~\eqref{eq:convexification0} (see~\cite{SCP1} for proof). 
\begin{lem} \label{lemma:lemma1}
\vspace{-1mm}

If we have $N^{\text{strap}}$ forecasts of the  $k^{\text{th}}$ obstacle's position from time $i\in \mathbb{Z}^{1:N^{h}}$ and the previous SCP trajectory $\{\overline{\mathbold{x}}\}_{1}^{N_h}$, then we can define the $j^{th}$ bootstrap lumped collision avoidance coefficients $\alpha_{i,k}^j$, $\beta_{i,k}^j$ and the standard deviation of the collision avoidance constraint $\Delta_{i,k}$ as:
\vspace{-1mm}
\begin{align}
    \alpha_{i,k}^j &\triangleq -C^{T}(C\overline{\mathbold{x}}_i -\, \hat{\mathbold{y}}^j_{i,k})\label{eq:alpha}\\
    \beta_{i,k}^j &\triangleq \overline{r}_k\|(C\overline{\mathbold{x}}_{i}  -\, \hat{\mathbold{y}}^j_{i,k})\|_2- (C\overline{\mathbold{x}}_{i})^T(C\overline{\mathbold{x}}_{i}  - \, \hat{\mathbold{y}}^j_{i,k})\label{eq:beta} \\
    \Delta_{i,k} &\triangleq \sqrt{\mathbold{p}_i^{T}\Sigma_{\alpha_{i,k}}\mathbold{p}_i + 2\mathbold{p}_k^{T}\Sigma_{\alpha\beta_{i,k}} + \Sigma_{\beta{i,k}}} \label{eq:delta_ik},
    \vspace{-1mm}
\end{align}
where, \small$\Sigma_{\alpha_{i,k}}\triangleq \mbox{cov}\left(\alpha_{i,k}^j,\alpha_{i,k}^j\right)$\normalsize, \small$\Sigma_{\beta_{i,k}}\triangleq \mbox{cov}\left(\beta_{i,k}^j,\beta_{i,k}^j\right)$\normalsize, and \small$\Sigma_{\alpha\beta_{i,k}}\triangleq \mbox{cov}\left(\alpha_{i,k}^j,\alpha_{i,k}^j\right)$\normalsize~are sample covariance matrices computed using the bootstrapped coefficients ~\small$\{\alpha_{i,k}\}_{1}^{N^{\text{strap}}}$\normalsize~and~\small$\{\beta_{i,k}\}_{1}^{N^{\text{strap}}}$\normalsize~and~\small$\mathbold{p}_i \triangleq C\mathbold{x}_i \in \mathbb{R}^{3}$\normalsize~. Let the dimension of the null space of~\small$\Sigma_{\alpha_{i,k}}$ be $n_{i,k} \geq 0$\normalsize~.\footnote{For all our numerical simulation, $\Sigma_{\alpha_{i,k}}$ is strictly positive definite. However, in the case of one or multiples measurable states are noiseless,  $\Sigma_{\alpha_{i,k}}$ can be ill-conditioned. Alternative to adding $I^{\text{null}}_{i,k}$ which can be numerically expansive to determine, we recommend applying Algorithm~\ref{alg:one} only to states that measurement noises are present and adapt Theorem \ref{thm:formulation3} with deterministic forecasts for the states without noise and the distributionally robust chance constraint formulation for the noisy ones.} The standard deviation $\Delta_{i,k}$ has the following upper bound,
%     \vspace{-1mm}
%     \begin{align}
%     \Delta_{i,k} \leq \underbrace{\mathbold{1}^{T}|\left(\Sigma_{\alpha_{i,k}} + I_{i,k}^{\text{null}} \right)^{1/2}\mathbold{p}_i-\mathbold{h}_{i,k}| + \sqrt{3k_{i,k}}}_{\triangleq \zeta_{i,k} \in \mathbb{R}},
%     \vspace{-1mm}
% \end{align}
\vspace{-1mm}
\begin{align}
    \Delta_{i,k}\! \leq\! \mathbold{1}^{T}|\Tilde{\Sigma}_{\alpha_{i,k}}^{\frac{1}{2}}\left(\mathbold{p}_i\!-\!\mathbold{h}_{i,k}\right)|\! +\!\! \sqrt{3k_{i,k}}\triangleq \zeta_{i,k},
    \vspace{-2mm}
\end{align}
where~\small$\Tilde{\Sigma}_{\alpha_{i,k}}\! =\! \Sigma_{\alpha_{i,k}}\!+I_{i,k}^{\text{null}}$, $I_{i,k}^{\text{null}} = \left[\begin{smallmatrix}
\mathbold{0} & \mathbold{0}\\
\mathbold{0} & I_{n_{i,k}\times n_{i,k}}
\end{smallmatrix}\right]\in \mathbb{R}^{3\times3}$\normalsize~, and
\vspace{-2mm}
\begin{align} \label{eq:handk}
\left[\begin{smallmatrix}
\mathbold{h}_{i,k}\\
 k_{i,k}
\end{smallmatrix} \right] &\triangleq \left[\begin{smallmatrix}
-\left(\Sigma_{\alpha_{i,k}} +I_{i,k}^{\text{null}}\right)^{-1}\Sigma_{\alpha\beta_{i,k}}\\
\Sigma_{\beta_{i,k}}-\Sigma_{\alpha\beta_{i,k}}^{T}\left(\Sigma_{\alpha_{i,k}}+ I_{i,k}^{\text{null}}\right)^{-1}\Sigma_{\alpha\beta_{i,k}}
\end{smallmatrix}
\right].\end{align} 
\end{lem}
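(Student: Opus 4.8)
The plan is to view $\Delta_{i,k}^2$ as a quadratic form in $\mathbold{p}_i\in\mathbb{R}^3$, regularize its (possibly rank-deficient) leading matrix, complete the square, and then close with two elementary norm estimates. First I would note that, by \eqref{eq:alpha}--\eqref{eq:delta_ik}, the quantity $\Delta_{i,k}^2 = \mathbold{p}_i^{T}\Sigma_{\alpha_{i,k}}\mathbold{p}_i + 2\mathbold{p}_i^{T}\Sigma_{\alpha\beta_{i,k}} + \Sigma_{\beta_{i,k}}$ is precisely the sample variance of the affine quantity $(\alpha_{i,k}^{j})^{T}\mathbold{p}_i + \beta_{i,k}^{j}$ over the $N^{\text{strap}}$ bootstrap replicates, so this quadratic form is nonnegative and $\Delta_{i,k}$ is a well-defined real number. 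Because $I_{i,k}^{\text{null}}\succeq 0$ we have $\mathbold{p}_i^{T}\Sigma_{\alpha_{i,k}}\mathbold{p}_i \le \mathbold{p}_i^{T}\tilde\Sigma_{\alpha_{i,k}}\mathbold{p}_i$, whence $\Delta_{i,k}^2 \le \mathbold{p}_i^{T}\tilde\Sigma_{\alpha_{i,k}}\mathbold{p}_i + 2\mathbold{p}_i^{T}\Sigma_{\alpha\beta_{i,k}} + \Sigma_{\beta_{i,k}}$, and now $\tilde\Sigma_{\alpha_{i,k}}=\Sigma_{\alpha_{i,k}}+I_{i,k}^{\text{null}}$ is positive definite (its $n_{i,k}$-dimensional null space has been filled in), hence invertible.

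I would then complete the square on this last expression: it equals $(\mathbold{p}_i-\mathbold{h}_{i,k})^{T}\tilde\Sigma_{\alpha_{i,k}}(\mathbold{p}_i-\mathbold{h}_{i,k}) + k_{i,k}$, where the minimizer is $\mathbold{h}_{i,k}=-\tilde\Sigma_{\alpha_{i,k}}^{-1}\Sigma_{\alpha\beta_{i,k}}$ and the residual is the Schur complement $k_{i,k}=\Sigma_{\beta_{i,k}}-\Sigma_{\alpha\beta_{i,k}}^{T}\tilde\Sigma_{\alpha_{i,k}}^{-1}\Sigma_{\alpha\beta_{i,k}}$, which is exactly \eqref{eq:handk}. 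Next I would check $k_{i,k}\ge 0$: the block matrix $\bigl[\begin{smallmatrix}\tilde\Sigma_{\alpha_{i,k}} & \Sigma_{\alpha\beta_{i,k}}\\ \Sigma_{\alpha\beta_{i,k}}^{T} & \Sigma_{\beta_{i,k}}\end{smallmatrix}\bigr]$ is the sample covariance of $(\alpha_{i,k}^{j},\beta_{i,k}^{j})$ plus $\mathrm{diag}(I_{i,k}^{\text{null}},0)\succeq 0$, hence positive semidefinite, and since its $(1,1)$ block is positive definite, its Schur complement $k_{i,k}$ is nonnegative.

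Putting these together gives $\Delta_{i,k}^2 \le \|\tilde\Sigma_{\alpha_{i,k}}^{\frac{1}{2}}(\mathbold{p}_i-\mathbold{h}_{i,k})\|_2^{2} + k_{i,k}$. Writing $\mathbold{w}\triangleq\tilde\Sigma_{\alpha_{i,k}}^{\frac{1}{2}}(\mathbold{p}_i-\mathbold{h}_{i,k})\in\mathbb{R}^3$, it remains to verify $\sqrt{\|\mathbold{w}\|_2^{2}+k_{i,k}}\le \mathbold{1}^{T}|\mathbold{w}| + \sqrt{3k_{i,k}}$. Squaring the right-hand side yields $(\mathbold{1}^{T}|\mathbold{w}|)^{2} + 2\sqrt{3k_{i,k}}\,\mathbold{1}^{T}|\mathbold{w}| + 3k_{i,k}$; since $(\mathbold{1}^{T}|\mathbold{w}|)^{2}=(\sum_{\ell=1}^{3}|w_\ell|)^{2}\ge\sum_{\ell=1}^{3}w_\ell^{2}=\|\mathbold{w}\|_2^{2}$, the cross term is nonnegative, and $3k_{i,k}\ge k_{i,k}$, the inequality holds; taking nonnegative square roots gives $\Delta_{i,k}\le\zeta_{i,k}$, as claimed.

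There is no deep obstacle here — the effort is careful bookkeeping around the possibly singular $\Sigma_{\alpha_{i,k}}$. The one point requiring attention is to confirm, all at once, that adding $I_{i,k}^{\text{null}}$ (i) only inflates the quadratic form, so the direction of the inequality is preserved, (ii) makes $\tilde\Sigma_{\alpha_{i,k}}$ genuinely invertible, which is what legitimizes both the completion of the square and the Schur-complement argument for $k_{i,k}\ge 0$, and (iii) still reproduces the exact closed forms in \eqref{eq:handk}. I would also double-check the dimension conventions ($\mathbold{p}_i,\mathbold{h}_{i,k},\mathbold{w}\in\mathbb{R}^3$, which is where the constant $3$ and $\mathbold{1}=[1,1,1]^{T}$ enter), and remark that the $\sqrt{3}$ is merely a convenient, slightly loose constant rather than a tight one.
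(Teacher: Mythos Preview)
Your proposal is correct and follows essentially the same route as the paper's proof: regularize the quadratic form by adding $I_{i,k}^{\text{null}}$, complete the square to reach $(\mathbold{p}_i-\mathbold{h}_{i,k})^{T}\tilde\Sigma_{\alpha_{i,k}}(\mathbold{p}_i-\mathbold{h}_{i,k})+k_{i,k}$, and then finish with the $\ell_1\!\ge\!\ell_2$ estimate in $\mathbb{R}^3$. Your write-up is in fact slightly cleaner than the paper's---you give an explicit Schur-complement argument for $k_{i,k}\ge 0$, which the paper leaves implicit, and your final squaring step avoids the paper's auxiliary vector $\xi_{i,k}$.
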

\vspace{-3mm}
\begin{proof}
See Appendix.
\end{proof}
\vspace{-2mm}
\begin{figure*}[!ht]
    \centering
    \includegraphics[width=17.5cm]{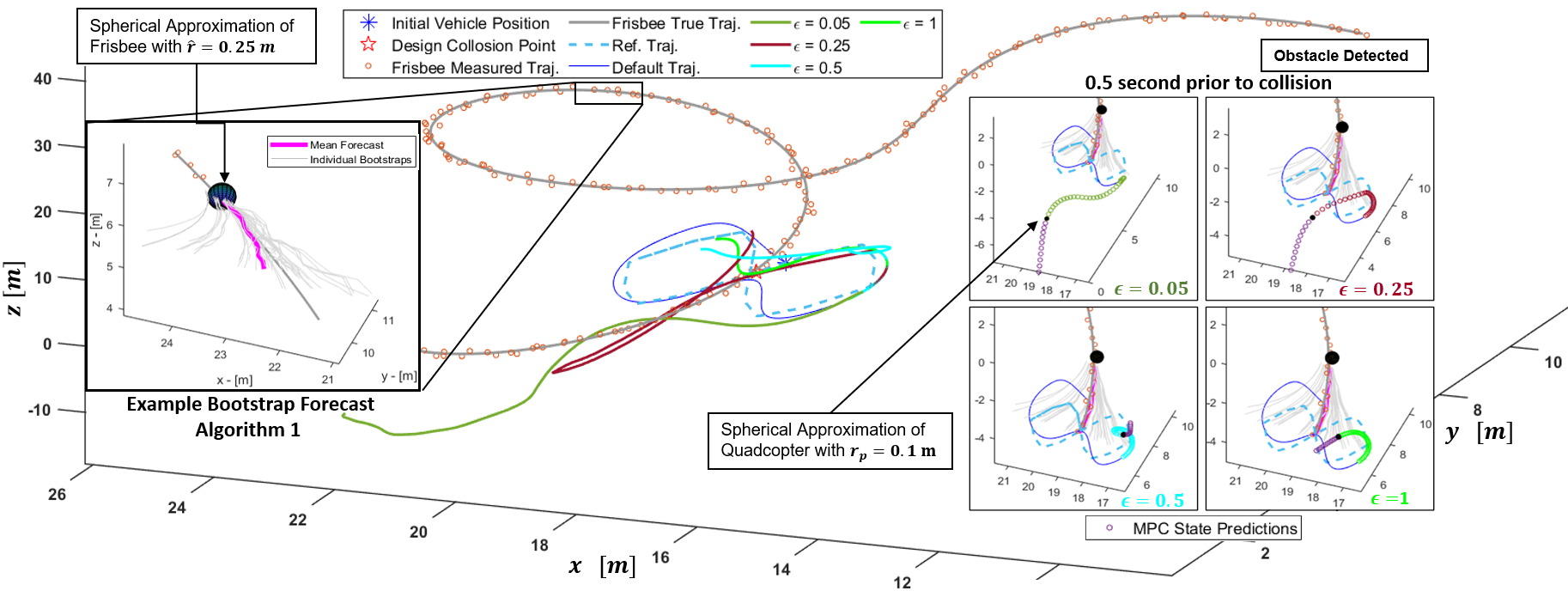}
    \label{fig:data}
        \caption{Four Monte-Carlo simulations with agent dynamics~\eqref{eq:sim_agent_dyn} and a Frisbee obstacle~(see Fig.\ref{fig:MSSA_architecture}) are compared. The same obstacle behaviors are simulated while the agent tracks the same figure '8' reference trajectory with four risk levels $\epsilon = \{0.05,0.25,0.5,1\}$. The simulation is designed to be difficult: the vehicle must deviate from its reference trajectory as the obstacle trajectory is designed to intersect the agent's reference trajectory with noise obstacle trajectory measurements. All measurement noises are sampled uniformly between $[-0.125,\, 0.125]$ meters. The bootstrap obstacle forecast uses the parameters: $L = 24$, $N^{\text{train}} = 100$, $N^{\text{step}} = 5$, $\delta_t = 20$, $N_\sigma = 8$, $N^{\text{strap}} = 40$. SSA-MPC uses the constants $N^h = 10$, $\chi = 50$ and $\tau =0.25$ with fixed 4-step SCP iterations. The tuple ($\{\lambda^j\}_{1}^{t_j}$, $\{\mathbold{\mu}^j\}_{1}^{t_j}$, $\mathbold{\phi}^j$),$\forall j \in \mathbb{Z}^{1:40}$ in Algorithm \ref{alg:one} is computed with observables measured at 20 Hz. The four sub-diagrams show the planned trajectory at 4 risk levels; the planner is more conservative as $\epsilon \to 0$, and aligns with the results shown in Table~\ref{table:table1}.}
        \vspace{-4mm}
\end{figure*}
Note that while all bootstraps can be incorporated as separate obstacle avoidance constraints, such operations are costly, as the number of constraints grows linearly with $N^{\text{strap}}$. Instead, we estimate the ensemble mean and covariance of the distance from the obstacle. The theorem below uses a distributionally robust chance constraint to account for  \textit{all bootstrap distributions} that can have this mean and covariance. This approach results in a significantly reduced number of obstacle avoidance constraints, and this number remains fixed regardless of the number of bootstrap predictions $N^{\text{strap}}$.
% \subsection{Risk-aware MPC}
% Suppose the discrete-time system $\eqref{eq:DT_system_dynamics}$ is linear given by:
% \begin{equation} \label{eq:DT_linear_dynamics}
%     \mathbold{x}_{i+1} = A\mathbold{x}_{i} + B\mathbold{u}_i
% \end{equation}
% where $A\in \mathbb{R}^{n_{\mathbold{x}}\times n_{\mathbold{x}}}$ is the linear update matrix, and $B\in \mathbb{R}^{n_{\mathbold{x}}\times n_{\mathbold{u}}}$ is the actuation matrix. 

\begin{theorem}
(SSA-MPC) \label{thm:formulation3}
Consider Problem \ref{problem:formulation1} under Assumptions \ref{assum:assumption1}-\ref{assum:assumption3} with system dynamics $\eqref{eq:DT_system_dynamics}$ and bootstrap SSA forecasts of all obstacles' center positions. Given a risk tolerance $\epsilon$, the solution to the following optimal control problem  is a feasible solution of Problem \ref{problem:formulation1} as $w\xrightarrow[]{}\infty$. 
The SCP optimization problem at iteration $w$ is:
\vspace{-3mm}
\begin{subequations}\label{eq:optimization_risk}
\begin{align} 
        \{ \mathbold{u}^{*}\}_{1}^{N_h} &=  \min_{\tiny \begin{array}{c}
           \mathbold{u}_{i}\in \mathbb{R}^{n_{\mathbold{u}}}  \\
            \mathbold{s}_{i,k}\in \mathbb{R}^{3}
        \end{array}}\normalsize   \sum_{i=1}^{N_h} L_i(\mathbold{y}^{ref}_{i} - G\mathbold{x}_{i},\mathbold{u}_i) \\
     \mbox{s.t.} \quad &  \mathbold{x}_{i+1} =  A\mathbold{x}_i+ B\mathbold{u}_i\\
     & \mathbold{x}_{i} \in \mathcal{D}^{\mathbold{x}}, \quad \mathbold{u}_{i} \in \mathcal{D}^{\mathbold{u}}, \quad \mathbold{x}_1 = \mathbold{x}_{init} \label{eq:optimization_stateninput_scp_constraint} \\
     & \Lambda_{i,k}\begin{bmatrix}
\mathbold{x}_i &
\mathbold{s}_{i,k}
\end{bmatrix}^{T} \leq \Gamma_{i,k}
 \label{eq:nonlinear_robust_scp_constraint_risk}\\
     & \|\mathbold{x}_{i} - \overline{\mathbold{x}}_i\| \leq \chi \tau^{w} \label{eq:nonlinearSCP_guarantees}\quad  \forall i,k \in \mathbb{Z}^{1:N^h}\! \times\! \mathbb{Z}^{1:N^{\text{obs}}}
\end{align}
\end{subequations}
where $\{\bar{\mathbold{x}}\}_{1}^{N^{h}}$ is the solution to the $(w-1)^{\text{th}}$ iteration of the SCP optimization, $\Lambda_{i,k}\in \mathbb{R}^{7\times11}$ and $\Gamma_{i,k}\in \mathbb{R}^{7}$ encode the risk-based collision avoidance relationships,
\vspace{-1mm}
\begin{align*}
    \Lambda_{i,k}\! =\! \left[\begin{smallmatrix}
    \mathbb{E}[\alpha_{i,k}]^{T}C & \mathbold{1}^{T}\nu_{\epsilon_n}\\
    \Tilde{\Sigma}_{\alpha_{i,k}}^{1/2}C & -I_{3\times3} \\
    - \Tilde{\Sigma}_{\alpha_{i,k}}^{1/2}C & -I_{3\times3} \\
    \end{smallmatrix}\right], \quad  \Gamma_{i,k}
\!= \!\left[\begin{smallmatrix}
    -\mathbb{E}[\beta_{i,k}]- \nu_{\epsilon_n}\sqrt{3k_{i,k}}\\
     \Tilde{\Sigma}_{\alpha_{i,k}}^{1/2}\mathbold{h}_{i,k}\\
    -\Tilde{\Sigma}_{\alpha_{i,k}}^{1/2}\mathbold{h}_{i,k}
    \end{smallmatrix} \right].
\end{align*}
% \note{$\delta_0$ below might be a "symbol clash" with $\delta_t$ above}
such that~\small$\epsilon_n \triangleq \frac{\epsilon}{N^\text{obs}}$\normalsize~and~\small $\nu_{\epsilon_n} \triangleq \sqrt{\frac{1-\epsilon_n}{\epsilon_n}}$\normalsize.
Lastly,~\small$\chi \geq 0$\normalsize~and~\small$\tau \in (0,1)$\normalsize~are the initial trust region and worst-case rate of convergence, respectively.
% Additional decision variables $\mathbold{s}_{i,k}$ are introduced as the cost of the reformulation.
\vspace{-3mm}
\end{theorem}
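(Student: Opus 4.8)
The plan is to exhibit constraint \eqref{eq:nonlinear_robust_scp_constraint_risk} as the last link of a chain of \emph{one-directional} implications descending from the chance constraint \eqref{eq:optimization_obsavoid_setconstraint} of Problem \ref{problem:formulation1}, and then to argue that the shrinking trust region \eqref{eq:nonlinearSCP_guarantees} drives the SCP iterates to a point at which the only gap in that chain, the linearization, closes. The first step is risk allocation: reading the collision event through the Cartesian position $C\mathbold{x}_i$, Boole's inequality (the union bound) gives $\mathbb{P}(\mathbold{x}_i\in\mathcal{O}^{\text{pred}})\le\sum_{k=1}^{N^{\text{obs}}}\mathbb{P}(C\mathbold{x}_i\in\mathcal{O}_k^{\text{pred}})$, so it suffices to guarantee $\mathbb{P}(C\mathbold{x}_i\in\mathcal{O}_k^{\text{pred}})\le\epsilon_n=\epsilon/N^{\text{obs}}$ for every obstacle $k$ and every $i\in\mathbb{Z}^{1:N^h}$; the $N^{\text{obs}}$ per-obstacle bounds sum back to $\epsilon$, which is why $\epsilon_n$ (hence $\nu_{\epsilon_n}$) carries the factor $1/N^{\text{obs}}$.

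The second step turns the keep-out set into a half-space event and then into a deterministic condition. For a single bootstrap forecast $\hat{\mathbold{y}}^j_{i,k}$ the event $C\mathbold{x}_i\notin\mathcal{O}_k^{\text{pred}}$ is exactly $\|C\mathbold{x}_i-\hat{\mathbold{y}}^j_{i,k}\|_2\ge\overline{r}_k$, equivalently the concave constraint \eqref{eq:convexification0}; its SCP linearization \eqref{eq:convexification1} about any point $\overline{\mathbold{x}}$ implies \eqref{eq:convexification0} (the over-approximation property established in \cite{SCP1}), so the affine half-space $\{\alpha_{i,k}^{jT}C\mathbold{x}_i+\beta_{i,k}^j\le 0\}$ of Lemma \ref{lemma:lemma1} sits inside $\{C\mathbold{x}_i\notin\mathcal{O}_k^{\text{pred}}\}$, whence $\mathbb{P}(C\mathbold{x}_i\in\mathcal{O}_k^{\text{pred}})\le\mathbb{P}(\alpha_{i,k}^TC\mathbold{x}_i+\beta_{i,k}>0)$ over the bootstrap ensemble. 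Treating $(\alpha_{i,k},\beta_{i,k})$ as random with the first two sample moments of the $N^{\text{strap}}$ bootstraps, the scalar $Z=\alpha_{i,k}^T\mathbold{p}_i+\beta_{i,k}$ with the deterministic decision $\mathbold{p}_i=C\mathbold{x}_i$ has mean $\mathbb{E}[\alpha_{i,k}]^T\mathbold{p}_i+\mathbb{E}[\beta_{i,k}]$ and variance exactly $\Delta_{i,k}^2$ from \eqref{eq:delta_ik}; the one-sided Chebyshev (Cantelli) inequality -- the worst case over \emph{all} distributions sharing those two moments -- shows $\mathbb{P}(Z>0)\le\epsilon_n$ is implied by the second-order-cone condition $\mathbb{E}[\alpha_{i,k}]^T\mathbold{p}_i+\mathbb{E}[\beta_{i,k}]+\nu_{\epsilon_n}\Delta_{i,k}\le 0$, so the resulting guarantee is distributionally robust and in particular holds for whatever distribution the bootstrapped coefficients follow.

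The third step affinizes. Since $\nu_{\epsilon_n}\ge 0$ and Lemma \ref{lemma:lemma1} supplies $\Delta_{i,k}\le\zeta_{i,k}=\mathbold{1}^T|\tilde{\Sigma}_{\alpha_{i,k}}^{1/2}(\mathbold{p}_i-\mathbold{h}_{i,k})|+\sqrt{3k_{i,k}}$, replacing $\Delta_{i,k}$ by $\zeta_{i,k}$ only tightens the constraint, and modelling its $\ell_1$ piece in epigraph form with the slack $\mathbold{s}_{i,k}\in\mathbb{R}^3$ -- the six scalar inequalities $\pm\tilde{\Sigma}_{\alpha_{i,k}}^{1/2}(C\mathbold{x}_i-\mathbold{h}_{i,k})\le\mathbold{s}_{i,k}$ together with $\mathbold{1}^T\mathbold{s}_{i,k}$ substituted for $\mathbold{1}^T|\cdot|$ -- reproduces exactly the seven rows of the risk constraint \eqref{eq:nonlinear_robust_scp_constraint_risk} with the stated $\Lambda_{i,k}$ and $\Gamma_{i,k}$. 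Composing the three steps, feasibility of \eqref{eq:nonlinear_robust_scp_constraint_risk} for all $(i,k)$ implies the per-obstacle chance constraints and hence \eqref{eq:optimization_obsavoid_setconstraint} at \emph{any} linearization point; since the dynamics and the linear state/input constraints \eqref{eq:optimization_stateninput_scp_constraint} are carried over verbatim from Problem \ref{problem:formulation1}, every SCP iterate is already feasible for Problem \ref{problem:formulation1}.

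It remains to pass to $w\to\infty$. The trust region \eqref{eq:nonlinearSCP_guarantees} has radius $\chi\tau^w\to 0$ with $\tau\in(0,1)$, so consecutive iterates differ by at most the geometric tail of $\sum_w\chi\tau^w$ and therefore converge to a fixed point $\mathbold{x}^\infty$ of the linearization; Problem \ref{problem:formulation1}'s feasible set being closed, $\mathbold{x}^\infty$ with the associated inputs is feasible, and at $\overline{\mathbold{x}}=\mathbold{x}^\infty$ the linearization \eqref{eq:convexification1} coincides with \eqref{eq:convexification0}, so the only conservatism surviving in the limit is the union bound, the Cantelli bound and $\Delta_{i,k}\le\zeta_{i,k}$, each of which only shrinks the feasible set. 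Recursive feasibility of the intermediate subproblems is underwritten by Assumption \ref{assum:assumption3} (the initial separation exceeds $dv_{\text{max}}/f^+$, so a collision-free trajectory exists) and, should a subproblem still be infeasible, by the back-up step of Algorithm \ref{alg:one}. The part I expect to require the most care is keeping every inequality pointing the right way, so that the composition -- union bound, SCP over-approximation, Cantelli, $\Delta_{i,k}\le\zeta_{i,k}$, slack epigraph -- genuinely certifies \emph{feasibility}, together with checking that $\Lambda_{i,k}$ and $\Gamma_{i,k}$ are precisely the epigraph form of the $\zeta$-tightened distributionally robust constraint; by contrast, the SCP convergence step is essentially a citation to \cite{SCP1,scp2} once the trust region is in place.
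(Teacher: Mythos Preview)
Your proposal is correct and follows essentially the same route as the paper's proof: union bound to split the risk into per-obstacle budgets $\epsilon_n$, SCP over-approximation of the keep-out constraint to get the affine $z_{i,k}=\alpha_{i,k}^T\mathbold{x}_i+\beta_{i,k}\le 0$, the moment-based distributionally robust (Cantelli) bound to obtain $\mathbb{E}[z_{i,k}]+\nu_{\epsilon_n}\Delta_{i,k}\le 0$, Lemma~\ref{lemma:lemma1} to replace $\Delta_{i,k}$ by the affine upper bound $\zeta_{i,k}$, slack variables for the $\ell_1$ epigraph, and a citation for SCP convergence under the shrinking trust region. The only cosmetic differences are that the paper invokes \cite{ghaoui2003DRO} in place of naming Cantelli and cites \cite{nonlinearSCP} directly for convergence rather than spelling out the Cauchy-sequence argument you give.
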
 
\begin{proof}
% Let $i\in\{1,\cdots,N_h\}$ be the MPC horizon index, $j\in\{1,\cdots,N^\text{strap}\}$ be the bootstrapped obstacle forecast index, and $k \in \{1,\cdots,N^\text{obs}\}$ be the number of obstacle detected.
% Let's begin with the random bootstrapped obstacle forecasts, $ \mathbold{z}_{i,k}$. 
The $j^{\text{th}}$ random bootstrapped obstacle forecasts can be denoted as $   \mathbold{z}_{i,k}^{j} \triangleq (\alpha^j_{i,k})^{T} \mathbold{x}_{i} + \beta^j_{i,k}$, where $\alpha^j_{i,k}$ and $\beta^j_{i,k}$ are defined in \eqref{eq:alpha} and \eqref{eq:beta}. We have shown that the obstacle avoidance constraint~\eqref{eq:convexification0} has an affine over approximation~\eqref{eq:convexification1}, which is equivalently given by $\mathbold{z}_{i,k}^{j} < 0$. Hence, the chance constraint~\eqref{eq:optimization_obsavoid_setconstraint} is, 
\begin{align*}
\vspace{-1mm}
    \mathbb{P}(\mathbold{x}_{i} \in \mathcal{O}_{\text{pred}}) = \mathbb{P}\Big(\bigcup_{k=1}^{N^{\text{obs}}} \{\mathbold{z}_{i,k} \geq 0\}\Big) \leq \sum_{k=1}^{N^{\text{obs}}} \mathbb{P}(\mathbold{z}_{i,k}\geq 0).
    \vspace{-1mm}
\end{align*}
Enforcing the chance constraints $\mathbb{P}(\mathbold{z}_{i,k} \geq 0) \leq \epsilon_n$, $\forall k \in \mathbb{Z}^{1:N^{\text{obs}}}$ also satisfies~\eqref{eq:optimization_obsavoid_setconstraint}.
We can satisfy this chance constraint in a distributionally robust manner:
% We consider a distributionally robust formulation since we can only obtain the first two moments of $\mathbold{z}_{i,k}$. Further, the probability of violating the collision avoidance constraint should be less than a probability $\epsilon$ \eqref{eq:optimization_obsavoid_setconstraint}, which can be formulated as:
\begin{equation*}
    \sup_{\mathbold{\kappa} \sim\left(\mathbb{E}[\mathbold{z}_{i,k}],\Sigma_{\mathbold{z}_{i,k}}\right)}\mathbb{P}\{\mathbold{\kappa}\geq0 \} \leq \epsilon_n, \, \forall i,k \in \mathbb{Z}^{1:N^{h}} \times \mathbb{Z}^{1:N^{\text{obs}}},
\end{equation*}
where~\small$\mathbb{E}[\mathbold{z}_{i,k}]$\normalsize~and~\small$\Sigma_{\mathbold{z}_{i,k}}$\normalsize~are the sample mean and covariance matrix of the bootstrapped~\small$\{\mathbold{z}_{i,k}\}_{1}^{N^\text{strap}}$\normalsize. We reformulate the above statement as a deterministic constraint as shown in~\cite{ghaoui2003DRO},
\begin{equation} \label{eq:inequality_risk_inZ}
    \underbrace{\mathbb{E}[\mathbold{z}_{i,k}]}_{\mathbb{E}[\alpha_{i,k}]^{T}C\mathbold{x}_i+ \mathbb{E}[\beta_{i,k}]}\!\!\!\!\!\!\!\!\!\!\!\!\!\! + \nu_{\epsilon_n}\!\underbrace{\sqrt{\Sigma_{\mathbold{z}_{i,k}}}}_{\Delta_{i,k}}\! \leq\! 0,\,   \forall i\! \in\! \mathbb{Z}^{1:N^h}\!, k\! \in\! \mathbb{Z}^{1:N^{\text{obs}}}.
\end{equation}
Constraint (\ref{eq:inequality_risk_inZ}) is not affine in the optimization variable, as is desirable for real-time application. %Therefore, we  reformulate (\ref{eq:inequality_risk_inZ}) as: $\mathbb{E}[\alpha_{i,k}]^{T}C\mathbold{x}_i + \mathbb{E}[\beta_{i,k}] + \nu_{\epsilon_n}\Delta_{i,k} \leq 0$.
% $\forall i,k$:
% \begin{equation*}
%     \mathbb{E}[\alpha_{i,k}]^{T}C\mathbold{x}_i + \mathbb{E}[\beta_{i,k}] + \nu_{\epsilon_n}\Delta_{i,k} \leq 0
% \end{equation*}
By Lemma \ref{lemma:lemma1}, $\Delta_{i,k}\leq \zeta_{i,k}$, and we deduce the following tighter inequality constraint as a numerically appealing alternative to~\eqref{eq:inequality_risk_inZ},\vspace{-1mm}
% \begin{multline}\label{eq:scp_constraint_risk}
%      \mathbb{E}[\alpha_{i,k}]^{T}\mathbold{p}_i + \mathbb{E}[\beta_{i,k}] + \\ \sqrt{\frac{1-\epsilon}{\epsilon}} \left(\mathbold{1}^{T}|\Sigma_{\alpha_{i,k}}^{1/2}\mathbold{p}_i-\mathbold{h}_{i,k}|+ \sqrt{3k_{i,k}}\right) \leq 0.
% \end{multline}
\begin{multline}\label{eq:scp_constraint_risk}
     \mathbb{E}[\alpha_{i,k}]^{T}C\mathbold{x}_i + \mathbb{E}[\beta_{i,k}] +\\ \nu_{\epsilon_n}\left(\mathbold{1}^{T}|\Sigma_{\alpha_{i,k}}^{1/2}\mathbold{p}_i-\mathbold{h}_{i,k}|+ \sqrt{3k_{i,k}}\right)\leq 0.
     \vspace{-2mm}
\end{multline}
To account for the absolute value term, we introduce auxiliary optimization variables $\mathbold{s}_{i,k}$ that satisfy the following:
\begin{align*}
    \Sigma_{\alpha_{i,k}}^{1/2}\mathbold{p}_i-\mathbold{h}_{i,k} \leq \mathbold{s}_{i,k} , \quad -\Sigma_{\alpha_{i,k}}^{1/2}\mathbold{p}_i+\mathbold{h}_{i,k} \leq \mathbold{s}_{i,k}.
    \vspace{-2mm}
\end{align*}
Therefore, satisfying~\eqref{eq:nonlinear_robust_scp_constraint_risk} is equivalent to satisfying~\eqref{eq:scp_constraint_risk}.

Convergence of the SCP is proven in \cite{nonlinearSCP} which is based on implementing a trust region via second-order cone constraints \eqref{eq:nonlinearSCP_guarantees}. The authors also show the solution to the SCP formulation as $w\to \infty$ feasibly solves problem \ref{problem:formulation1}.\footnote{To be numerically feasible, $w$ is usually upper bounded by a finite integer, resulting in a sub-optimal but still feasible solution.} 
%Lastly, SCP formation such as the cost decreasing over SCP iterations. 
\\
% Further, we introduce auxiliary optimization variable $\mathbold{s}_{i,k}$ to replace the absolute value terms. Finally, we obtain the following affine constraint:
% \begin{equation}\label{eq:lambda_n_Gamma}
%     \left[\begin{smallmatrix}
%     \mathbb{E}[\alpha_{i,k}]^{T}C & \mathbold{1}^{T}\sqrt{\frac{1-\epsilon_n}{\epsilon_n}}\\
%     \Sigma_{\alpha_{i,k}}^{1/2}C & -I_{3\times3} \\
%     - \Sigma_{\alpha_{i,k}}^{1/2}C & -I_{3\times3} \\
%     \end{smallmatrix}\right] \begin{bmatrix}
%     \mathbold{x}_{i}\\
%     \mathbold{s}_{i,k}
%     \end{bmatrix}\leq \left[\begin{smallmatrix}
%     -\mathbb{E}[\beta_{i,k}]- \sqrt{\frac{3(1-\epsilon_n)}{\epsilon_n}k_{i,k}}\\
%      \Sigma_{\alpha_{i,k}}^{1/2}\mathbold{h}_{i,k}\\
%     - \Sigma_{\alpha_{i,k}}^{1/2}\mathbold{h}_{i,k}
%     \end{smallmatrix} \right]
% \end{equation}
\vspace{-8mm}
\end{proof}
% \begin{corollary}
% The extension of Theorem \ref{thm:formulation3} to discrete-time system with nonlinear dynamics \eqref{eq:DT_system_dynamics} is \textbf{blah, blah, blah}.
% \end{corollary}
%17.5cm

% \begin{table}
% \begin{center}
% \footnotesize
% \begin{tabular}{ |c|c|c|c|c|c|c| } \hline 
%   Cases& $\epsilon$ &  $0.05$& $0.1$ & $0.25$ &$0.5$ & $1$ \\ \hline \hline
% %&  $\%$ Converge/Success&  $\%$ Converge/Success &  $\%$ Converge/Success &  $\%$ Converge/Success &  $\%$ Converge/Success  \\ \hline
%  & $\%$Feas.                   & 97.4 & 98.4  & 99.2 & 99.2& 100\\ 
% Const. & $\%$Succ.             & 100 & 99.7  & 99.7 & 99.7& 64.0\\ 
% Speed & $\overline{d}_{min}$   & 2.32 & 1.93 & 1.49& 1.18 & 0.67\\ \
% &$\sigma(d_{min})$             & 0.75 &  0.71 & 0.58& 0.43& 0.37\\ \hline  \hline
%   & $\%$Feas.                  & 97.8      & 98.0     & 98.6 & 98.3     & 100 \\
% Ball & $\%$Succ.               & 100      & 100    & 99.7 & 99.4     & 64.9\\ 
% w/drag &$\overline{d}_{min}$   & 3.29      & 2.88   & 2.20 & 1.68     & 0.63\\ 
% &$\sigma(d_{min})$             & 1.40      & 1.35   & 1.08 & 0.84   & 0.28\\ \hline  \hline
%  & $\%$Feas.                   & 89.5  & 95.2   & 98.8 & 98.3 & 98.1\\ 
% Frisbee & $\%$Succ.            & 100  & 100     &100   & 100 & 74.9\\ 
% w/drag & $\overline{d}_{min}$  & 5.06  & 4.05   &2.88 & 2.06 &  1.09\\ \
% &$\sigma(d_{min})$             & 1.95  & 1.55   &1.08 & 0.87 &  0.95\\ \hline 
% \end{tabular}
% \caption{Summary of results from Monte-Carlo simulations}
% \label{table:table1}
% \normalsize
% \end{center}
% \vspace{-8mm}
% \end{table}
% \vspace{-2mm}

\begin{table}
\begin{center}
\footnotesize
\begin{tabular}{ |c|c|c|c|c|c|c| } \hline 
  Cases& $\epsilon$ &  $0.05$& $0.1$ & $0.25$ &$0.5$ & $1$ \\ \hline \hline
%&  $\%$ Converge/Success&  $\%$ Converge/Success &  $\%$ Converge/Success &  $\%$ Converge/Success &  $\%$ Converge/Success  \\ \hline
 & $\%$Feas.                   & 97.5 & 98.2  & 98.9 & 99.6& 100\\ 
Const. & $\%$Succ.             & 100 & 100  & 100 & 100& 59.0\\ 
Speed & $\overline{d}_{min}$   & 2.26 & 1.85 & 1.41& 1.12 & 0.64\\ \
&$\sigma(d_{min})$             & 0.42 &  0.33 & 0.25& 0.22& 0.35\\ \hline  \hline
  & $\%$Feas.                  & 99.5      & 99.6     & 99.9 & 100    & 100 \\
Ball & $\%$Succ.               & 100      & 100    & 100 & 100    & 79.3\\ 
w/drag &$\overline{d}_{min}$   & 2.60      & 2.14   & 1.63 & 1.27     & 0.64\\ 
&$\sigma(d_{min})$             & 1.08      & 0.93   & 0.70 & 0.50   & 0.27\\ \hline  \hline
 & $\%$Feas.                   & 90.3  & 97.4   & 98.3 & 98.6 & 97.8\\ 
Frisbee & $\%$Succ.            & 100  & 100     &100   & 100 & 58.0\\ 
w/drag & $\overline{d}_{min}$  & 4.97  & 3.97   &2.85 & 2.01 &  0.78\\ \
&$\sigma(d_{min})$             & 1.97  & 1.53   &1.15 & 0.91 &  0.77\\ \hline 
\end{tabular}
\caption{Summary of results from Monte-Carlo simulations.}
\label{table:table1}
\normalsize
\end{center}
\vspace{-6mm}
\end{table}
\vspace{-1mm}
\section{Numerical Example}
\vspace{-2mm}
We consider a quadcopter that follows a reference trajectory $\mathbold{y}^{\text{ref}}$ while avoiding randomly generated moving obstacles and adhering to state and control constraints. Let the position of the quadcopter in frame $E$ be $x, y, z$ and the Euler angles roll, pitch, and yaw are given by $\varphi, \theta, \psi$ respectively. The following dynamic model is used in the simulation: 
\begin{equation} \label{eq:sim_agent_dyn}
\vspace{-1mm}
 \ddot{x} = -9.81\theta, \, \,
       \ddot{y} = 9.81\varphi,\,\, \ddot{z}=-u_1 - 9.81,\,\, \ddot{\psi} = u_4,
\end{equation}
% \vspace{-1mm}
The planner control inputs are given by $u_1, \theta, \varphi, u_4$. The reference trajectory consists of the desired positions, \small$\{x^{\text{ref}}\}_{1}^{N^{T}},\{y^{\text{ref}}\}_{1}^{N^{T}},\{z^{\text{ref}}\}_{1}^{N^{T}}$\normalsize~and yaw angles \small$\{\psi^{\text{ref}}\}_{1}^{N^{T}}$\normalsize.

To demonstrate the effectiveness of the proposed method, we conducted Monte-Carlo (MC) simulations of the proposed planner avoiding three differently behaved obstacles which are introduced once in each run.  See the provided simulator for details.\footnote{https://github.com/skylarXwei/Riskaware$\_$MPC$\_$SSA$\_$Sim.git} Case 1 is a constant speed spherical obstacle without drag. Case 2 is a thrown spherical (ball) obstacle with drag. Case 3 is a Frisbee that is thrown at various initial angles, position, speed, and rotation speed. The spherical obstacle dynamics are captured by a 6 state ODE with drag penalties proportional to its velocities. The Frisbee is modelled following \cite{hummel2003frisbee}, using a full 12 state model identical to Fig.\ref{fig:MSSA_architecture} and aerodynamic drag coefficients. 

We conduct 1000 MC simulations per $\epsilon$ level to compare the numerical feasibility, percent success in obstacle avoidance (if the MPC planner is feasible), and the planner's conservativeness, as measured by the minimum distance between the obstacle and agent centers. For the three cases, the obstacle speed ranges are $[0.41,\, 8.43],\, [3.41,\, 6.37],$ and $[5.76,\,6.68]$ m/s, respectively.  The MPC planning and measurement rates are fixed to be 20 Hz. With a 10 step horizon length and 40 bootstraps, the average per planner update rate is $0.030\pm 0.0014$ sec solved using Gurobi~\cite{gurobi} on an Intel i7-9700K CPU @3.6GHz processor, dynamic simulation written in MATLAB. 
% The optimization problem is solved using Gurobi~\cite{gurobi}. 
The results in Table~\ref{table:table1} show the applicability of our SSA-MPC algorithm, despite vast differences in obstacle behavior. Further, as the risk tolerance $\epsilon$ shrinks, the percentage success in obstacle avoidance (when the solution is feasible) increases, with a trade-off in the feasibility of optimization  \eqref{eq:optimization_risk}. The risk tolerance $\epsilon$ can also viewed as a robustness parameter which inversely proportional to the distance between the agent and obstacles. 
\vspace{-2mm}
\section{Conclusion}
\vspace{-2mm}
Our data-driven risk-aware obstacle avoidance planner showcased near perfect results in avoiding moving obstacles with limited and noisy measurements and no prior knowledge about the obstacle behaviors. We not only offered a new paradigm that can extract obstacle dynamics online allowing short prediction, but an equally important risk-aware MPC formulation that enables real-time usage. The simulation result also shows that adjusting the risk level $\epsilon$ can implicitly adjust the safety distance between the agent and obstacles.

%\addtolength{\textheight}{-12cm}   % This command serves to balance the column lengths
                                  % on the last page of the document manually. It shortens
                                  % the textheight of the last page by a suitable amount.
                                  % This command does not take effect until the next page
                                  % so it should come on the page before the last. Make
                                  % sure that you do not shorten the textheight too much.

%%%%%%%%%%%%%%%%%%%%%%%%%%%%%%%%%%%%%%%%%%%%%%%%%%%%%%%%%%%%%%%%%%%%%%%%%%%%%%%%

%%%%%%%%%%%%%%%%%%%%%%%%%%%%%%%%%%%%%%%%%%%%%%%%%%%%%%%%%%%%%%%%%%%%%%%%%%%%%%%%x

%%%%%%%%%%%%%%%%%%%%%%%%%%%%%%%%%%%%%%%%%%%%%%%%%%%%%%%%%%%%%%%%%%%%%%%%%%%%%%%%
%%%%%%%%%%%%%%%%%%%%%%%%%%%%%%%%%%%%%%%%%%%%%%%%%%%%%%%%%%%%%%%%%%%%%%%%%%%%%%%%
\vspace{-2mm}
\footnotesize{
\bibliography{references}
}
\vspace{-1mm}
\bibliographystyle{ieeetr}
\normalsize

\vspace{-3mm}
\section*{APPENDIX}
\vspace{-2mm}
\begin{proof}
% (1) Since $\Sigma_{\alpha_{i,k}}$ and $\Sigma_{\beta_{i,k}}$ are sample covariance matrices, both of them are positive semi-definite by construction. Because of the presence of measurement noise, the bootstrapped 
% tuples ($\{^j\lambda^{\text{istrap}}\}_{1}^{t_j}$, $\{^j\mathbold{\mu}^{\text{istrap}}\}_{1}^{t_j}$, $^j\mathbold{\phi}^{\text{istrap}}$) $\forall j\in\mathbb{Z}_{1:\text{Nstraps}}$ for $x,y,z$ will not be identically equal to each other. As a result, minor variations in the forecast is expected $\{\hat{\mathbold{y}}_{i,k}^j\},$ $\forall j\in \mathbb{Z}^{1:N^{\text{strap}}}$. Therefore, the $N^\text{strap}$ copies of $\{\alpha^j_{i,k}\}$ and $\{\beta^j_{i,k}\}$, $\forall j\in \mathbb{Z}^{1:N^{\text{strap}}}$ can be viewed as random valued vectors that have non-zero variance. Therefore, diagonal elements of $\Sigma_{\alpha_{i,k}}$ are strictly positive.

%Nevertheless, the Linear Recurrence Coefficients obtained from all bootstraps are identical and equal to the true coefficient if $L>= d_{k}$. In this case, the inequality constraint \eqref{eq:nonlinear_robust_scp_constraint_risk} can be simplified to $\alpha_{i,k}^{T}\mathbold{p}_k + \beta_{i,k} \leq \beta$ where $\alpha_{i,k} = \alpha_{i,k}^{j}, \forall j$ and $\beta_{i,k} = \beta_{i,k}^{j}, \forall j$.

%(2) 
Let the eigendecomposition of ~\small$\Sigma_{\alpha_{i,k}}$\normalsize~be the following:~\small
$
 \Sigma_{\alpha_{i,k}} = \left[\begin{smallmatrix}
 U_r & U_n
 \end{smallmatrix}\right]  \left[\begin{smallmatrix}
\Lambda_r& 0\\
0 & 0
 \end{smallmatrix}\right]  \left[\begin{smallmatrix}
 U_r & U_n
 \end{smallmatrix}\right]^{T}
$\normalsize
~where \small$U_r \in \mathbb{R}^{3\times (3-n_{i,k})}$\normalsize~is comprised of the eigenvectors of $\Sigma_{\alpha_{i,k}}$ that are orthonormal. The columns of \small$U_n \in \mathbb{R}^{n_{i,k}}$\normalsize~are the complementary orthonormal basis that spans the null space of $\Sigma_{\alpha_{i,k}}$.
By substituting~\eqref{eq:handk} one can verify the following inequality:
\vspace{-1mm}
\begin{equation} \label{eq:upperbound1}
     \Delta_{i,k}\! \leq\!  \sqrt{(\mathbold{p}_i\!-\!\mathbold{h}_{i,k})^{T}\Tilde{\Sigma}_{\alpha_{i,k}}(\mathbold{p}_i\!-\!\mathbold{h}_{i,k})\!+\!k_{i,k}}\triangleq\Tilde{\Delta}_{i,k}
     \vspace{-1mm}
\end{equation}
where~\small$\Tilde{\Sigma}_{\alpha_{i,k}}$ is a positive definite matrix because
\vspace{-1mm}
\begin{equation*}
    \Tilde{\Sigma}_{\alpha_{i,k}} = \left[\begin{smallmatrix}
 U_r & U_n
 \end{smallmatrix}\right]  \left(\left[\begin{smallmatrix}
\Lambda_r& 0\\
0 & 0
 \end{smallmatrix}\right] + \left[\begin{smallmatrix}
0 & 0\\
0 & I_{n_{i,k}\times n_{i,k}}
 \end{smallmatrix}\right] \right)\left[\begin{smallmatrix}
 U_r & U_n
 \end{smallmatrix}\right]^{T}.
 \vspace{-1mm}
\end{equation*}
We further upper bound \eqref{eq:upperbound1} by adding a positive constant, $\mathbold{\iota}_{i,k} \triangleq\frac{2}{\sqrt{3}}\mathbold{1}^{T}|\Tilde{\Sigma}_{\alpha_{i,k}}^{1/2}\mathbold{p}_k - \mathbold{h}_k|$, to $\Tilde{\Delta}_{i,k}^2$ and obtain
\vspace{-1mm}
\begin{align*}
    \Tilde{\Delta}_{i,k}^{2} \leq  \Tilde{\Delta}_{i,k}^{2} + \frac{2}{\sqrt{3}}\mathbold{1}^{T}|\Tilde{\Sigma}_{\alpha_{i,k}}^{1/2}(\mathbold{p}_k - \mathbold{h}_k)|\leq \sqrt{\xi_{i,k}^{T}\xi_{i,k}}
    \vspace{-3mm}
\end{align*}
where \small$\xi_{i,k} \triangleq |\Tilde{\Sigma}_{\alpha_{i,k}}^{1/2}(\mathbold{p}_k-\mathbold{h}_{i,k})|  + \mathbold{1}\sqrt{\frac{k_{i,k}}{3}}\in \mathbb{R}^{3}$\normalsize. For the inequality to hold, the expression $\mathbold{\iota}_{i,k}$ must always be non-negative which is true by construction.
Further, let \small$\zeta_{i,k} = \mathbold{1}^{T}\xi_{i,k}\in \mathbb{R}$\normalsize, then  \small$\zeta_{i,k}^2 = (\xi_{i,k}^T \mathbold{1}) (\mathbold{1}^{T}\xi_{i,k}) = \xi_{i,k}^T\xi_{i,k} + 2\epsilon_{\xi}$\normalsize. If $\epsilon_{\xi} \geq0$, we can then state $\Delta_{i,k} \leq \zeta_{i,k}$ which completes the proof (since \small$\xi_{i,k} = [\xi_{i,k}^x,\xi_{i,k}^y, \xi_{i,k}^z]\in \mathbb{R}^{3}$\normalsize, then \small
$\epsilon_{\xi} = \xi_{i,k}^x\xi_{i,k}^y + \xi_{i,k}^x\xi_{i,k}^z + \xi_{i,k}^y\xi_{i,k}^z > 0\,\,
$\normalsize  
because \small$\xi_{i,k}^x,\xi_{i,k}^y, \xi_{i,k}^z \in \mathbb{R}_{+}$\normalsize).
\end{proof}

\end{document}